\pgfplotsset{compat=1.8}
\definecolor{color0}{HTML}{636EFA}
\definecolor{color1}{HTML}{EF553B}
\definecolor{color2}{HTML}{00CC96}
\definecolor{color3}{HTML}{AB63FA}
\definecolor{color4}{HTML}{FFA15A}
\definecolor{color5}{HTML}{19D3F3}
\definecolor{color6}{HTML}{FF6692}
\definecolor{color7}{HTML}{B6E880}
\definecolor{color8}{HTML}{FF97FF}
\definecolor{color9}{HTML}{FECB52}
\let \mc = \mathcal
\let \mt = \mathtt
\let \mbb = \mathbb
\let \tu = \textup
\let \eq = \equiv
\let \sube = \subseteq
\let \sm = \setminus
\let \es = \varnothing
\let \ta = \rightarrow
\let \at = \leftarrow
\newcommand{\pa}[2]{\ensuremath{\tu{pa}_{#1}(#2)}}
\newcommand{\ch}[2]{\ensuremath{\tu{ch}_{#1}(#2)}}
\newcommand{\pre}[2]{\ensuremath{\tu{pre}_{#1}(#2)}}
\newcommand{\s}[1][0.5]{\ensuremath{\mkern #1 mu}}
\newcommand{\ind}{{\;\perp\!\!\!\perp\;}}
\newcommand{\istate}[4]{\ensuremath{#1 \ind #2 \mid #3 \; [\, #4 \,]}}
\newtheorem{prop}{Proposition}
\title{Fast Scalable and Accurate Discovery of DAGs Using the Best Order Score Search and Grow-Shrink Trees}
\author{%
  Bryan Andrews \\
  Department of Psychiatry \& Behavioral Sciences \\
  University of Minnesota \\
  Minneapolis, MN 55454 \\
  \texttt{andr1017@umn.edu} \\
  \And
  Joseph Ramsey \\
  Department of Philosophy \\
  Carnegie Mellon University \\
  Pittsburgh, PA 15213 \\
  \texttt{jdramsey@andrew.cmu.edu} \\
  \And
  Rub{\'e}n S{\'a}nchez-Romero \\
  Center for Molecular and Behavioral Neuroscience \\
  Rutgers University \\
  Newark, NJ 07102 \\
  \texttt{ruben.saro@rutgers.edu} \\
  \And
  Jazmin Camchong \\
  Department of Psychiatry \& Behavioral Sciences \\
  University of Minnesota \\
  Minneapolis, MN 55454 \\
  \texttt{camch002@umn.edu} \\
  \And
  Erich Kummerfeld \\
  Institute for Health Informatics \\
  University of Minnesota \\
  Minneapolis, MN 55454 \\
  \texttt{erichk@umn.edu} \\
}
\begin{document}

\maketitle

\begin{abstract}
    Learning graphical conditional independence structures is an important machine learning problem and a cornerstone of causal discovery. However, the accuracy and execution time of learning algorithms generally struggle to scale to problems with hundreds of highly connected variables---for instance, recovering brain networks from fMRI data. We introduce the best order score search (BOSS) and grow-shrink trees (GSTs) for learning directed acyclic graphs (DAGs) in this paradigm. BOSS greedily searches over permutations of variables, using GSTs to construct and score DAGs from permutations. GSTs efficiently cache scores to eliminate redundant calculations. BOSS achieves state-of-the-art performance in accuracy and execution time, comparing favorably to a variety of combinatorial and gradient-based learning algorithms under a broad range of conditions. To demonstrate its practicality, we apply BOSS to two sets of resting-state fMRI data: simulated data with pseudo-empirical noise distributions derived from randomized empirical fMRI cortical signals and clinical data from 3T fMRI scans processed into cortical parcels. BOSS is available for use within the TETRAD project which includes Python and R wrappers.
\end{abstract}


\section{Introduction}
\label{sec:introduction}

    We present a permutation-based algorithm, Best Order Score Search (BOSS), for learning a Markov equivalence class of directed acyclic graphs (DAGs). A novel method for caching intermediate calculations for permutation-based algorithms, Grow-Shrink Trees (GSTs), is also developed and presented in this paper. Our implementation of BOSS using GSTs scales well in both accuracy and time to higher numbers of variables and graph densities. We demonstrate that in particular, this method scales at least to the complexity of dense cortical parcellations of fMRI data.

    In real world systems, it is not unusual to have hundreds or thousands of variables, each of which may be causally connected to dozens of other variables. Models of functional brain imaging (fMRI), functional genomics, electronic health records, and financial systems are just a few examples. In order for structure learning methods to have an impact on these real world problems, they need to be not only highly accurate but also highly scalable in both dimensions.
    
    This work follows earlier research on permutation-based structure learning \cite{lam2022greedy, shimizu2011directlingam, solus2021consistency, teyssier2005ordering}. A previous permutation-based method, Greedy Relaxations of the Sparsest Permutation (GRaSP) \cite{lam2022greedy}, demonstrated strong performance on highly connected graphs, but struggled to scale to sufficiently large numbers of variables. By using BOSS and GSTs, we are able to overcome this challenge, while maintaining nearly identical performance to GRaSP. As our simulations show, other popular methods fall short on accuracy, scalability, or both, for such large, highly connected models.

    In the remainder of this paper, we provide background for our approach in Section \ref{sec:background}, followed by a discussion of GSTs in Section \ref{sec:gsts}. We then introduce BOSS in Section \ref{sec:boss} and validate it by comparing it to several alternatives. We show that BOSS using GSTs compares favorably to a number of best-performing combinatorial and gradient-based learning algorithms under a broad range of conditions, up to 1000 variables with an average degree of 20 in Section \ref{sec:simulations}. Finally, to demonstrate its practicality, we apply BOSS to two sets of resting-state fMRI data: simulated data with pseudo-empirical noise distributions derived from randomized empirical fMRI cortical signals, and clinical data from 3T fMRI scans processed into 379 cortical parcels in Section \ref{sec:fmri}. Section \ref{sec:discussion} provides a brief discussion.

    \subsection{Our Contributions}
    Our novel contributions to the field include the following:
    \begin{enumerate}
        \item We present a new data structure, Grow-Shrink Trees (GSTs), for efficiently caching results of permutation-based structure learning algorithms. Using GSTs dramatically speeds up many existing permutation-based methods such as GRaSP.
        \item We present a new structure learning algorithm, the Best Order Score Search (BOSS), which has similar performance as GRaSP (and thus superior performance to other DAG-learning methods), but is more convenient to use than GRaSP because it has fewer tuning parameters and is faster and more scalable.
        \item We prove that BOSS is asymptotically correct and provide an implementation within the TETRAD project\cite{ramsey2018tetrad}.
        \item We present validations of BOSS's finite sample performance via standard graphical model simulations and on both real and simulated fMRI data.
    \end{enumerate}

\section{Background}
\label{sec:background}

    The following conventions are used throughout this paper: $V$ denotes a non-empty finite set of variables which double as vertices in the graphical context, lowercase letters $a,b,c,\ldots \: \in V$ denote variables or singletons, uppercase letters $A,B,C,\ldots \: \sube V$ denote sets, and $X$ denotes a collection of random variables indexed by $V$ whose joint probability distribution is denoted by $P$. 
    
    Probabilistic conditional independence between the members of $X$ corresponding to disjoint sets $A,B,C \sube V$ is denoted \istate{A}{B}{C}{P} and reads $X_A$ and $X_B$ are independent given $X_C$.

\subsection{Permutations}
\label{sec:permuations}

    A \textit{permutation} is a sequence of variables $\pi = \langle a, b, c, \dots \rangle$. Let $v \in V$ and $i \in \mbb N$ $(i \leq |V|)$. If $\pi$ is a permutation of $V$, then $\pi$ is equipped with two methods: $\pi.\mt{index}(v)$ which returns the index of $v$ in $\pi$, and $\pi.\mt{move}(v, i)$ which returns the permutation obtained by removing $v$ from $\pi$ and reinserted at position $i$. Furthermore, $\pre{\pi}{v} \eq \{ w \in V \; : \; \pi.\mt{index}(w) < \pi.\mt{index}(v) \}$ is the \textit{prefix} of $v$.

\subsection{DAG Models}
\label{sec:dag_models}

    A \textit{directed acyclic graph} (DAG) model is a set probabilistic models whose conditional independence relations are described graphically by a DAG. In general, the vertices and edges of a DAG represent variables and conditional dependence relations, respectively. We use the notation $\mc G = (V, E)$ where $\mc G$ is a graph, $V$ is a vertex set, and $E$ is an edge set. In a DAG, the edge set is comprised of ordered pairs that represent directed edges\footnote{The edge $v \at w$ corresponds to the order pair $(v, w)$.} and contains no directed cycles. 
    
    Let $\mc G = (V, E)$ be a DAG and $v \in V$:
    \begin{align*}
        \pa{\mc G}{v} &\eq \{ w \in V \; : \; w \ta v \s[7] \tu{in} \s[7] \mc G \} \\
        \ch{\mc G}{v} &\eq \{ w \in V \; : \; v \ta w \s[7] \tu{in} \s[7] \mc G \}
    \end{align*}
    are the \textit{parents} and \textit{children} of $v$, respectively.

    Graphical conditional independence between disjoint subsets $A,B,C \sube V$ can be read off of a DAG using \textit{d-separation} and is denoted $\istate{A}{B}{C}{\mc G}$. This criterion admits the concept of a \textit{Markov equivalence class} (MEC) which (in the context of this paper) is a collection of DAGs that represent the same conditional independence relations.

\subsection{Causal Discovery}
\label{sec:causal_discovery}

    DAGs are connected to causality by the \textit{causal Markov} and \textit{causal faithfulness} assumptions. A DAG is \textit{causal} if it describes the true underlying causal process by placing a directed edge between a pair of variables if and only if the former directly causes the latter.

    \textit{Causal Markov}: If $\mc G = (V, E)$ is causal for a collection of random variables $X$ indexed by $V$ with probability distribution $P$, then for disjoint subsets $A,B,C \sube V$:
    \[
        \istate{A}{B}{C}{\mc G} \s[18] \Rightarrow \s[18] \istate{A}{B}{C}{P}.
    \]

    More generally, this implication is called the \textit{Markov property} and we say DAGs satisfying this property \textit{contain} the distribution. Moreover, a DAG $\mc G$ is \textit{subgraph minimal}\footnote{This concept is also known as SGS-minimality \cite{zhang2013comparison}.} if no subgraph of $\mc G$ contains $P$.

    \textit{Causal faithfulness}: If $\mc G = (V, E)$ is causal for a collection of random variables $X$ indexed by $V$ with probability distribution $P$, then for disjoint subsets $A,B,C \sube V$:
    \[
        \istate{A}{B}{C}{P} \s[18] \Rightarrow \s[18] \istate{A}{B}{C}{\mc G}.
    \]

    \noindent Accordingly, under the causal Markov and causal faithfulness assumptions, finding the MEC of the causal DAG is equivalent to identifying the simplest model that contains the data generating distribution. From this point of view, causal discovery is a standard model selection problem which we address using the Bayesian information criterion (BIC) \cite{haughton1988choice, schwarz1978estimating}.
    
    Let $\mc G$ be a DAG and $\bm X \overset{\tu{iid}}{\sim} P$ be a dataset drawn from a member of a curved exponential family:
    \begin{align*}
        \mt{BIC}(\mc G, \bm X) &= \sum_{v \in  V} \mt{BIC}(\bm X_v, \bm X_{\pa{\mc G}{v}}) \\
        &= \sum_{v \in  V} \ell_{v \mid \pa{\mc G}{v}}(\hat{\theta}_\tu{mle} \mid \bm X) - \frac{\lambda}{2} \s[3] | \s[1] \hat{\theta}_\tu{mle} \s[1] | \log(n)
    \end{align*}
    where $\lambda > 0$ is a penalty discount, $\ell$ is the log-likelihood function, and $\theta_\tu{mle}$ is the maximum likelihood estimate of the parameters; for details on curved exponential families, we refer reader to \cite{kass2011geometrical}. Importantly, the BIC is \textit{consistent}.

    \begin{prop}{\citet{haughton1988choice}}
        If $P$ is a member of a curved exponential family and $\bm X \overset{\tu{iid}}{\sim} P$, then the BIC is maximized in the large sample limit by DAG models containing $P$ that minimize the dimension of the parameters space.   
    \end{prop}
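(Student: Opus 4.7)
The plan is to reduce the proposition to a standard BIC-consistency argument by partitioning candidate DAGs into two classes: those that \emph{contain} $P$ (via the Markov property) and those that do not. For each DAG $\mc G$, write the score as
\[
    \mt{BIC}(\mc G, \bm X) = \ell_{\mc G}(\hat\theta_{\tu{mle}} \mid \bm X) - \tfrac{\lambda}{2}\, k_{\mc G} \log n,
\]
where $k_{\mc G}$ is the dimension of the parameter space associated with $\mc G$. The strategy is to show that, in the large sample limit, (i) every non-containing DAG is strictly dominated by any containing one, and (ii) among containing DAGs the comparison is settled by the dimension penalty.

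For step (i), I would appeal to the standard fact for curved exponential families that the normalized maximized log-likelihood converges almost surely to the supremum of $\mbb E_P[\log p_\theta]$ over the corresponding parametric family. If $\mc G$ contains $P$ this supremum equals $\mbb E_P[\log p_P]$ (the negative differential entropy of $P$), while if $\mc G$ does not contain $P$ the supremum is strictly smaller by a positive Kullback--Leibler gap $\delta_{\mc G}>0$. Hence for a non-containing $\mc G$ and any containing $\mc G^*$,
\[
    \mt{BIC}(\mc G^*, \bm X) - \mt{BIC}(\mc G, \bm X) = n\,\delta_{\mc G} + o(n) - O(\log n) \ta \infty.
\]
This is exactly the regime where the $O(n)$ likelihood gap dwarfs the $O(\log n)$ penalty differential.

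For step (ii), take two containing DAGs $\mc G_1, \mc G_2$. Both likelihoods converge at the same rate to $n\,\mbb E_P[\log p_P]$, and classical Wilks-type asymptotics for curved exponential families give
\[
    \ell_{\mc G_1}(\hat\theta_{\tu{mle}} \mid \bm X) - \ell_{\mc G_2}(\hat\theta_{\tu{mle}} \mid \bm X) = O_p(1),
\]
so the difference in BIC scores equals $\tfrac{\lambda}{2}(k_{\mc G_2}-k_{\mc G_1})\log n + O_p(1)$. Whichever DAG has the smaller parameter dimension therefore wins asymptotically, and equality of dimension makes the two indistinguishable up to $O_p(1)$, which is all the proposition claims.

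The main obstacle is step (ii): one needs to justify the $O_p(1)$ remainder for \emph{non-nested} containing models, not just nested ones. The cleanest route is to compare each $\mc G_i$ to a common refinement (or to the saturated model) through a chain of nested submodels within the curved exponential family, on each of which the usual $\chi^2$ limit for the likelihood ratio applies; combining these gives the needed $O_p(1)$ bound. Once this is in place, simply invoking the Haughton~\cite{haughton1988choice} result on the marginal likelihood expansion for curved exponential families packages (i) and (ii) into the stated consistency claim.
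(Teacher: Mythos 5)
The paper never proves this proposition: it is quoted directly from \cite{haughton1988choice}, and the only proof appearing in the paper (of BOSS's asymptotic correctness) simply invokes it. So your sketch should be judged against Haughton's argument, and on that count it is essentially the standard BIC-consistency proof that her theorem formalizes: a positive Kullback--Leibler gap gives every non-containing DAG model an $O(n)$ likelihood deficit that swamps the $O(\log n)$ penalty difference, while among containing models the maximized log-likelihoods differ by $O_p(1)$, so the $\tfrac{\lambda}{2}(k_{\mc G_2}-k_{\mc G_1})\log n$ term decides; since there are only finitely many DAGs on $V$, these pairwise comparisons yield the claim with probability tending to one. Two refinements are worth making. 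First, the obstacle you flag for non-nested containing models disappears if you compare each containing model to the fixed reference $\ell_P(\bm X)$, the log-likelihood of the true density: for any containing $\mc G$ the excess $\ell_{\mc G}(\hat\theta_{\tu{mle}}\mid\bm X)-\ell_P(\bm X)$ is $O_p(1)$ by the quadratic (Wilks-type) expansion at the true parameter, and differencing two such bounds already gives your step (ii); no chain through a common refinement is needed, and is in fact the shakier route, since a ``common refinement'' of two DAG models need not itself be a DAG model, whereas the truth (or the saturated model) is always a valid reference. Second, your phrase ``converges almost surely'' mixes modes of convergence: with only $O_p(1)$ control you get the in-probability statement, which is all the paper uses; an almost-sure version needs law-of-the-iterated-logarithm control of the likelihood terms (giving $O(\log\log n)$ a.s., still $o(\log n)$), which is precisely the extra care taken in \cite{haughton1988choice}. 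With those caveats, your outline is a correct reconstruction of the cited result rather than a divergence from anything in the paper.
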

    
    Notably, Gaussian and multinomial DAG models form curved exponential families in which the causal DAG minimizes the dimension of the parameters space, and thereby maximize the BIC \cite{lam2023causal}. For these distributions, DAG models belonging to the same MEC correspond to the same DAG model, so it is common for algorithms to return a graphical representation of the MEC called a CPDAG rather than a DAG. Choosing between DAG models in a MEC requires additional information.

    We greedily search over permutations of variables using grow-shrink (GS) \cite{margaritis1999bayesian} to project and score DAGs from permutations. Algorithm \ref{alg:grow} ($\mt{grow}$) and Algorithm \ref{alg:shrink} ($\mt{shrink}$) give the details of GS while Algorithm \ref{alg:proj} ($\mt{project}$) details the process of projecting a permutation to a DAG.

    \begin{minipage}{0.46\textwidth}
        \begin{algorithm}[H]
            \DontPrintSemicolon
            \caption{$\mt{grow}(\bm X, v, Z)$}
            \label{alg:grow}
            \KwIn{$\mt{data}: \bm X \s[9] \mt{var}: v \s[9] \mt{prefix}: Z$}
            \KwOut{$\mt{parents}: W$}
                
            $W \at \es$\;
            \Repeat{$w = \es$}{
                $w \at \mt{argmax}_{z \in Z} \mt{BIC}(\bm X_v, \bm X_{W \s[1] \cup \s[1] z})$\;
                \If{$w \neq \es$}{
                    $W \at W \cup w$\;
                }
            }
        \end{algorithm}
        \vskip 2mm
        \begin{algorithm}[H]
            \DontPrintSemicolon
            \caption{$\mt{shrink}(\bm X, v, W)$}
            \label{alg:shrink}
            \KwIn{$\mt{data}: \bm X \s[9] \mt{var}: v \s[9] \mt{parents}: W$}
            \KwOut{$\mt{parents}: W$}
                
            \Repeat{$w = \es$}{
                $w \at \mt{argmax}_{w \in W} \mt{BIC}(\bm X_v, \bm X_{W \sm w})$\;
                \If{$w \neq \es$}{
                    $W \at W \sm w$\;
                }
            }
        \end{algorithm}
    \end{minipage}
    \begin{minipage}{0.03\textwidth}
        \hfill
    \end{minipage}
    \begin{minipage}{0.46\textwidth}
        \begin{algorithm}[H]
            \DontPrintSemicolon
            \caption{$\mt{project}(\bm X, \pi)$}
            \label{alg:proj}
            \KwIn{$\mt{data}: \bm X \s[9] \mt{perm}: \pi$}
            \KwOut{$\mt{DAG}: \mc G$}
                
            $E \at \es$\;
            \ForEach{$v \in \pi$}{
                $Z \at \pre{\pi}{v}$\;
                $W \at \mt{grow}(\bm X, v, Z)$\;
                $W \at \mt{shrink}(\bm X, v, W)$\;
                \ForEach{$w \in W$}{
                    $E \at E \cup (v, w)$\;
                }
            }
            $\mc G \at (V, E)$\;
        \end{algorithm}
        \vspace{30mm}
    \end{minipage}

    However, how do we know that the causal DAG will even be constructed by $\mt{project}$? This is guaranteed by the follow results. Let $P$ is a member of a curved exponential family satisfying causal Markov and causal faithfulness: 
    \begin{itemize}
        \item If $\mc G$ is the causal DAG, then $\mc G$ is subgraph minimal \cite{lam2023causal, zhang2013comparison}.
        \item If $\mc G$ is a DAG and $\bm X \overset{\tu{iid}}{\sim} P$ then $\mc G$ is subgraph minimal if and only if there exists a permutation $\pi$ such that $\mc G = \mt{project}(\bm X, \pi)$ in the large sample limit \cite{lam2022greedy}.
    \end{itemize}
    
    These results admit the following strategy: (1) Search over permutations while using GS and the BIC to construct and score subgraph minimal DAG from these permutations. (2) Return the MEC of the subgraph minimal DAG that maximizes the BIC. The question is how to do so in a way that is both comprehensive and efficient.

\section{Grow-Shrink Trees}
\label{sec:gsts}

    Grow-shrink trees (GSTs) are tree data structures for caching the results of the $\mt{grow}$ and $\mt{shrink}$ subroutines of GS \cite{margaritis1999bayesian}. This data structure is compatible with many permutation-based structure learning algorithms, including BOSS and GRaSP. As an initialization step, a GST is constructed for each variable in the dataset which are then queried during search rather than running GS. Each tree has a root node representing the empty parent set. Parents are accumulated by traversing edges of the tree, with every node in the tree corresponding to a ``grown'' parent set. Each grown parent set corresponds to running the $\mt{grow}$ subroutine for some prefix. The $\mt{shrink}$ subroutine is also run and cached at each node of the tree.

    The benefit of using GSTs is in how efficiently they store information needed for running GS. In the $\mt{grow}$ subroutine, nodes are added to the GST one at a time and scored. A new child node is added to the tree for each possible addition, and the child nodes are sorted in descending order relative to their scores. After all candidate parents have been added to the tree, the first child in the sorted list that is also in the prefix is chosen and the corresponding edge traversed. The $\mt{shrink}$ subroutine can then be run and the removed parents and scores can be cached.

\subsection{An Example}
\label{sec:gst_example}

    This section walks through the usage of a GST applied to a toy example. Minimal subgraphs are depicted in Figure \ref{fig:gs} of which (\ref{fig:g1}) is the true DAG. Figure \ref{fig:gsts} outlines the process of growing a GST for $a$ while scoring permutations. In the following, we identify nodes in the GST by their path from the root. All ambiguities in sorting the branches of a node are resolved in alphabetical order. We proceed in sequence, but in theory some of these steps could be performed in parallel.
    
\begin{figure}
         \centering
        \begin{subfigure}[b]{0.19\textwidth}
             \centering
             \resizebox{0.8\textwidth}{!}{\begin{tikzpicture}[
    solid_node/.style={
        circle, 
        draw=black!60, 
        fill=black!5, 
        thick, 
        minimum size=6mm,
        inner sep=0mm,
    },
    solid_edge/.style={
        -{Stealth[round]},
        draw=black!60, 
        thick,
    }]

    \node[solid_node] (a) at (0,0) {$a$};
    \node[solid_node] (b) at (0,2) {$b$};
    \node[solid_node] (c) at (2,2) {$c$};
    \node[solid_node] (d) at (2,0) {$d$};

    \draw[solid_edge] (b) -- (a);
    \draw[solid_edge] (d) -- (a);
    \draw[solid_edge] (c) -- (b);
    \draw[solid_edge] (c) -- (d);

\end{tikzpicture}}
             \caption{}
             \label{fig:g1}
        \end{subfigure}
         \hfill
        \begin{subfigure}[b]{0.19\textwidth}
             \centering
             \resizebox{0.8\textwidth}{!}{\begin{tikzpicture}[
    solid_node/.style={
        circle, 
        draw=black!60, 
        fill=black!5, 
        thick, 
        minimum size=6mm,
        inner sep=0mm,
    },
    solid_edge/.style={
        -{Stealth[round]},
        draw=black!60, 
        thick,
    }]

    \node[solid_node] (a) at (0,0) {$a$};
    \node[solid_node] (b) at (0,2) {$b$};
    \node[solid_node] (c) at (2,2) {$c$};
    \node[solid_node] (d) at (2,0) {$d$};

    \draw[solid_edge] (b) -- (a);
    \draw[solid_edge] (b) -- (c);
    \draw[solid_edge] (b) -- (d);
    \draw[solid_edge] (d) -- (a);
    \draw[solid_edge] (d) -- (c);

\end{tikzpicture}}
             \caption{}
             \label{fig:g2}
        \end{subfigure}
         \hfill
        \begin{subfigure}[b]{0.19\textwidth}
             \centering
             \resizebox{0.8\textwidth}{!}{\begin{tikzpicture}[
    solid_node/.style={
        circle, 
        draw=black!60, 
        fill=black!5, 
        thick, 
        minimum size=6mm,
        inner sep=0mm,
    },
    solid_edge/.style={
        -{Stealth[round]},
        draw=black!60, 
        thick,
    }]

    \node[solid_node] (a) at (0,0) {$a$};
    \node[solid_node] (b) at (0,2) {$b$};
    \node[solid_node] (c) at (2,2) {$c$};
    \node[solid_node] (d) at (2,0) {$d$};

    \draw[solid_edge] (c) -- (a);
    \draw[solid_edge] (c) -- (b);
    \draw[solid_edge] (c) -- (d);
    \draw[solid_edge] (d) -- (a);
    \draw[solid_edge] (d) -- (b);
    \draw[solid_edge] (a) -- (b);

\end{tikzpicture}}
             \caption{}
             \label{fig:g3}
        \end{subfigure}
         \hfill
        \begin{subfigure}[b]{0.19\textwidth}
             \centering
             \resizebox{0.8\textwidth}{!}{\begin{tikzpicture}[
    solid_node/.style={
        circle, 
        draw=black!60, 
        fill=black!5, 
        thick, 
        minimum size=6mm,
        inner sep=0mm,
    },
    solid_edge/.style={
        -{Stealth[round]},
        draw=black!60, 
        thick,
    }]

    \node[solid_node] (a) at (0,0) {$a$};
    \node[solid_node] (b) at (0,2) {$b$};
    \node[solid_node] (c) at (2,2) {$c$};
    \node[solid_node] (d) at (2,0) {$d$};

    \draw[solid_edge] (c) -- (a);
    \draw[solid_edge] (c) -- (b);
    \draw[solid_edge] (c) -- (d);
    \draw[solid_edge] (a) -- (b);
    \draw[solid_edge] (a) -- (d);
    \draw[solid_edge] (b) -- (d);

\end{tikzpicture}}
             \caption{}
             \label{fig:g4}
        \end{subfigure}
         \hfill
        \begin{subfigure}[b]{0.19\textwidth}
             \centering
             \resizebox{0.8\textwidth}{!}{\begin{tikzpicture}[
    solid_node/.style={
        circle, 
        draw=black!60, 
        fill=black!5, 
        thick, 
        minimum size=6mm,
        inner sep=0mm,
    },
    solid_edge/.style={
        -{Stealth[round]},
        draw=black!60, 
        thick,
    }]

    \node[solid_node] (a) at (0,0) {$a$};
    \node[solid_node] (b) at (0,2) {$b$};
    \node[solid_node] (c) at (2,2) {$c$};
    \node[solid_node] (d) at (2,0) {$d$};

    \draw[solid_edge] (b) -- (a);
    \draw[solid_edge] (b) -- (c);
    \draw[solid_edge] (b) -- (d);
    \draw[solid_edge] (c) -- (a);
    \draw[solid_edge] (c) -- (d);
    \draw[solid_edge] (a) -- (d);

\end{tikzpicture}}
             \caption{}
             \label{fig:g5}
        \end{subfigure}
        \caption{Minimal subgraphs.}
        \label{fig:gs}
    \end{figure}
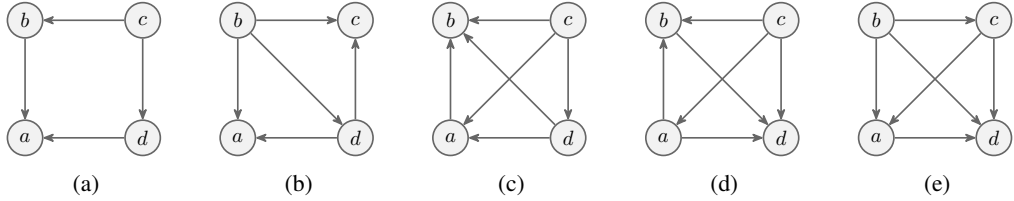
    
    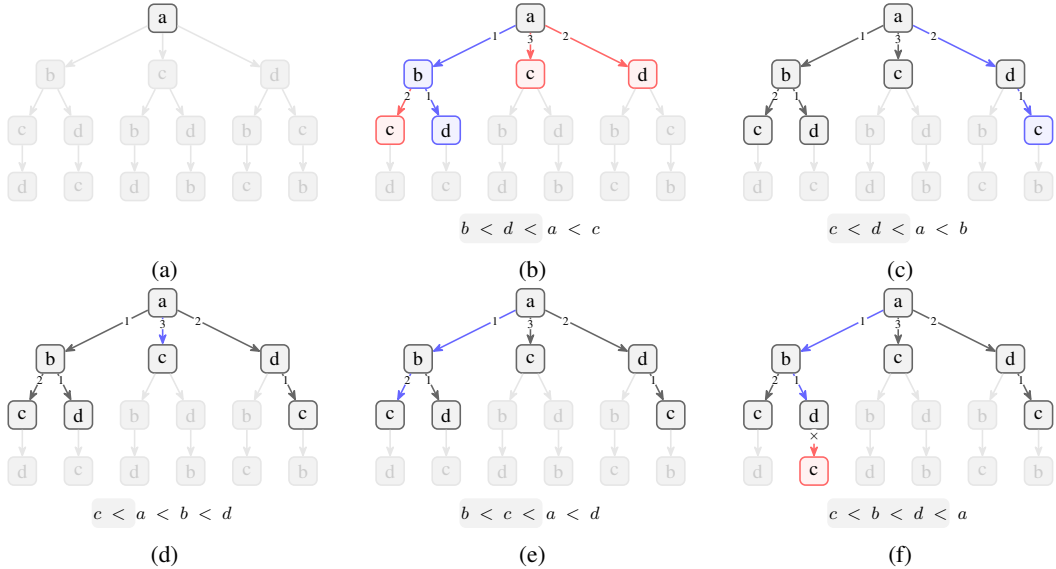
\begin{figure}
         \centering
         \begin{subfigure}[b]{0.3\textwidth}
             \centering
             \resizebox{\textwidth}{!}{\begin{tikzpicture}[
    scale=0.5,
    borderless_node/.style={
        rectangle, 
        draw=black!5, 
        fill=black!5, 
        rounded corners=1mm, 
        thick, 
        minimum size=5mm
    },
    solid_node/.style={
        rectangle, 
        draw=black!60, 
        fill=black!5, 
        rounded corners=1mm, 
        thick, 
        minimum size=5mm
    },
    transparent_node/.style={
        rectangle, 
        text=black!20,
        draw=black!10, 
        fill=black!5, 
        rounded corners=1mm, 
        thick, 
        minimum size=5mm
    },
    blue_node/.style={
        rectangle, 
        draw=blue!60, 
        fill=blue!5, 
        rounded corners=1mm, 
        thick, 
        minimum size=5mm
    },
    red_node/.style={
        rectangle, 
        draw=red!60, 
        fill=red!5, 
        rounded corners=1mm, 
        thick, 
        minimum size=5mm
    },
    solid_edge/.style={
        -{Stealth[round]},
        draw=black!60, 
        thick,
    },
    transparent_edge/.style={
        -{Stealth[round]},
        draw=black!10, 
        thick,
    },
    blue_edge/.style={
        -{Stealth[round]},
        draw=blue!60, 
        thick,
    },
    red_edge/.style={
        -{Stealth[round]},
        draw=red!60, 
        thick,
    },
    label/.style={
        circle, 
        inner sep=0mm, 
        near start, 
        fill=white,
    }]

    \draw[draw=white] (5,-2) -- (5,-1);
    \node[solid_node] (a) at (5,6) {a};
    
    \node[transparent_node] (ab) at (1,4) {b};
    \node[transparent_node] (ac) at (5,4) {c};
    \node[transparent_node] (ad) at (9,4) {d};
    
    \node[transparent_node] (abc) at (0,2) {c};
    \node[transparent_node] (abd) at (2,2) {d};
    \node[transparent_node] (acb) at (4,2) {b};
    \node[transparent_node] (acd) at (6,2) {d};
    \node[transparent_node] (adb) at (8,2) {b};
    \node[transparent_node] (adc) at (10,2) {c};
    
    \node[transparent_node] (abcd) at (0,0) {d};
    \node[transparent_node] (abdc) at (2,0) {c};
    \node[transparent_node] (acbd) at (4,0) {d};
    \node[transparent_node] (acdb) at (6,0) {b};
    \node[transparent_node] (adbc) at (8,0) {c};
    \node[transparent_node] (adcb) at (10,0) {b};
    
    \draw[transparent_edge] (a) -- (ab);
    \draw[transparent_edge] (a) -- (ac);
    \draw[transparent_edge] (a) -- (ad);
    
    \draw[transparent_edge] (ab) -- (abc);
    \draw[transparent_edge] (ab) -- (abd);
    \draw[transparent_edge] (ac) -- (acb);
    \draw[transparent_edge] (ac) -- (acd);
    \draw[transparent_edge] (ad) -- (adb);
    \draw[transparent_edge] (ad) -- (adc);
    
    \draw[transparent_edge] (abc) -- (abcd);
    \draw[transparent_edge] (abd) -- (abdc);
    \draw[transparent_edge] (acb) -- (acbd);
    \draw[transparent_edge] (acd) -- (acdb);
    \draw[transparent_edge] (adb) -- (adbc);
    \draw[transparent_edge] (adc) -- (adcb);

\end{tikzpicture}}
             \caption{}
             \label{fig:gst1}
         \end{subfigure}
         \hfill
         \begin{subfigure}[b]{0.3\textwidth}
             \centering
             \resizebox{\textwidth}{!}{\begin{tikzpicture}[
    scale=0.5,
    borderless_node/.style={
        rectangle, 
        draw=black!5, 
        fill=black!5, 
        rounded corners=1mm, 
        thick, 
        minimum size=5mm
    },
    solid_node/.style={
        rectangle, 
        draw=black!60, 
        fill=black!5, 
        rounded corners=1mm, 
        thick, 
        minimum size=5mm
    },
    transparent_node/.style={
        rectangle, 
        text=black!20,
        draw=black!10, 
        fill=black!5, 
        rounded corners=1mm, 
        thick, 
        minimum size=5mm
    },
    blue_node/.style={
        rectangle, 
        draw=blue!60, 
        fill=blue!5, 
        rounded corners=1mm, 
        thick, 
        minimum size=5mm
    },
    red_node/.style={
        rectangle, 
        draw=red!60, 
        fill=red!5, 
        rounded corners=1mm, 
        thick, 
        minimum size=5mm
    },
    solid_edge/.style={
        -{Stealth[round]},
        draw=black!60, 
        thick,
    },
    transparent_edge/.style={
        -{Stealth[round]},
        draw=black!10, 
        thick,
    },
    blue_edge/.style={
        -{Stealth[round]},
        draw=blue!60, 
        thick,
    },
    red_edge/.style={
        -{Stealth[round]},
        draw=red!60, 
        thick,
    },
    label/.style={
        circle, 
        inner sep=0mm, 
        near start, 
        fill=white,
    }]

    \draw[borderless_node] (2.5,-2) -- (5.4,-2) -- (5.4,-1) -- (2.5,-1) -- cycle;
    \node at (5, -1.5) {\footnotesize $b \; < \; d \; < \; a \; < \; c$};
    \node[solid_node] (a) at (5,6) {a};
    
    \node[blue_node] (ab) at (1,4) {b};
    \node[red_node] (ac) at (5,4) {c};
    \node[red_node] (ad) at (9,4) {d};
    
    \node[red_node] (abc) at (0,2) {c};
    \node[blue_node] (abd) at (2,2) {d};
    \node[transparent_node] (acb) at (4,2) {b};
    \node[transparent_node] (acd) at (6,2) {d};
    \node[transparent_node] (adb) at (8,2) {b};
    \node[transparent_node] (adc) at (10,2) {c};
    
    \node[transparent_node] (abcd) at (0,0) {d};
    \node[transparent_node] (abdc) at (2,0) {c};
    \node[transparent_node] (acbd) at (4,0) {d};
    \node[transparent_node] (acdb) at (6,0) {b};
    \node[transparent_node] (adbc) at (8,0) {c};
    \node[transparent_node] (adcb) at (10,0) {b};
    
    \draw[blue_edge] (a) -- (ab) node[label] {\tiny 1};
    \draw[red_edge] (a) -- (ac) node[label] {\tiny 3};
    \draw[red_edge] (a) -- (ad) node[label] {\tiny 2};
    
    \draw[red_edge] (ab) -- (abc) node[label] {\tiny 2};
    \draw[blue_edge] (ab) -- (abd) node[label] {\tiny 1};
    \draw[transparent_edge] (ac) -- (acb);
    \draw[transparent_edge] (ac) -- (acd);
    \draw[transparent_edge] (ad) -- (adb);
    \draw[transparent_edge] (ad) -- (adc);
    
    \draw[transparent_edge] (abc) -- (abcd);
    \draw[transparent_edge] (abd) -- (abdc);
    \draw[transparent_edge] (acb) -- (acbd);
    \draw[transparent_edge] (acd) -- (acdb);
    \draw[transparent_edge] (adb) -- (adbc);
    \draw[transparent_edge] (adc) -- (adcb);

\end{tikzpicture}}
             \caption{}
             \label{fig:gst2}
         \end{subfigure}
         \hfill
         \begin{subfigure}[b]{0.3\textwidth}
             \centering
             \resizebox{\textwidth}{!}{\begin{tikzpicture}[
    scale=0.5,
    borderless_node/.style={
        rectangle, 
        draw=black!5, 
        fill=black!5, 
        rounded corners=1mm, 
        thick, 
        minimum size=5mm
    },
    solid_node/.style={
        rectangle, 
        draw=black!60, 
        fill=black!5, 
        rounded corners=1mm, 
        thick, 
        minimum size=5mm
    },
    transparent_node/.style={
        rectangle, 
        text=black!20,
        draw=black!10, 
        fill=black!5, 
        rounded corners=1mm, 
        thick, 
        minimum size=5mm
    },
    blue_node/.style={
        rectangle, 
        draw=blue!60, 
        fill=blue!5, 
        rounded corners=1mm, 
        thick, 
        minimum size=5mm
    },
    red_node/.style={
        rectangle, 
        draw=red!60, 
        fill=red!5, 
        rounded corners=1mm, 
        thick, 
        minimum size=5mm
    },
    solid_edge/.style={
        -{Stealth[round]},
        draw=black!60, 
        thick,
    },
    transparent_edge/.style={
        -{Stealth[round]},
        draw=black!10, 
        thick,
    },
    blue_edge/.style={
        -{Stealth[round]},
        draw=blue!60, 
        thick,
    },
    red_edge/.style={
        -{Stealth[round]},
        draw=red!60, 
        thick,
    },
    label/.style={
        circle, 
        inner sep=0mm, 
        near start, 
        fill=white,
    }]

    \draw[borderless_node] (2.5,-2) -- (5.4,-2) -- (5.4,-1) -- (2.5,-1) -- cycle;
    \node at (5, -1.5) {\footnotesize $c \; < \; d \; < \; a \; < \; b$};
    \node[solid_node] (a) at (5,6) {a};
    
    \node[solid_node] (ab) at (1,4) {b};
    \node[solid_node] (ac) at (5,4) {c};
    \node[solid_node] (ad) at (9,4) {d};
    
    \node[solid_node] (abc) at (0,2) {c};
    \node[solid_node] (abd) at (2,2) {d};
    \node[transparent_node] (acb) at (4,2) {b};
    \node[transparent_node] (acd) at (6,2) {d};
    \node[transparent_node] (adb) at (8,2) {b};
    \node[blue_node] (adc) at (10,2) {c};
    
    \node[transparent_node] (abcd) at (0,0) {d};
    \node[transparent_node] (abdc) at (2,0) {c};
    \node[transparent_node] (acbd) at (4,0) {d};
    \node[transparent_node] (acdb) at (6,0) {b};
    \node[transparent_node] (adbc) at (8,0) {c};
    \node[transparent_node] (adcb) at (10,0) {b};
    
    \draw[solid_edge] (a) -- (ab) node[label] {\tiny 1};
    \draw[solid_edge] (a) -- (ac) node[label] {\tiny 3};
    \draw[blue_edge] (a) -- (ad) node[label] {\tiny 2};
    
    \draw[solid_edge] (ab) -- (abc) node[label] {\tiny 2};
    \draw[solid_edge] (ab) -- (abd) node[label] {\tiny 1};
    \draw[transparent_edge] (ac) -- (acb);
    \draw[transparent_edge] (ac) -- (acd);
    \draw[transparent_edge] (ad) -- (adb);
    \draw[blue_edge] (ad) -- (adc) node[label] {\tiny 1};
    
    \draw[transparent_edge] (abc) -- (abcd);
    \draw[transparent_edge] (abd) -- (abdc);
    \draw[transparent_edge] (acb) -- (acbd);
    \draw[transparent_edge] (acd) -- (acdb);
    \draw[transparent_edge] (adb) -- (adbc);
    \draw[transparent_edge] (adc) -- (adcb);

\end{tikzpicture}}
             \caption{}
             \label{fig:gst3}
         \end{subfigure}
         \hfill
         \begin{subfigure}[b]{0.3\textwidth}
             \centering
             \resizebox{\textwidth}{!}{\begin{tikzpicture}[
    scale=0.5,
    borderless_node/.style={
        rectangle, 
        draw=black!5, 
        fill=black!5, 
        rounded corners=1mm, 
        thick, 
        minimum size=5mm
    },
    solid_node/.style={
        rectangle, 
        draw=black!60, 
        fill=black!5, 
        rounded corners=1mm, 
        thick, 
        minimum size=5mm
    },
    transparent_node/.style={
        rectangle, 
        text=black!20,
        draw=black!10, 
        fill=black!5, 
        rounded corners=1mm, 
        thick, 
        minimum size=5mm
    },
    blue_node/.style={
        rectangle, 
        draw=blue!60, 
        fill=blue!5, 
        rounded corners=1mm, 
        thick, 
        minimum size=5mm
    },
    red_node/.style={
        rectangle, 
        draw=red!60, 
        fill=red!5, 
        rounded corners=1mm, 
        thick, 
        minimum size=5mm
    },
    solid_edge/.style={
        -{Stealth[round]},
        draw=black!60, 
        thick,
    },
    transparent_edge/.style={
        -{Stealth[round]},
        draw=black!10, 
        thick,
    },
    blue_edge/.style={
        -{Stealth[round]},
        draw=blue!60, 
        thick,
    },
    red_edge/.style={
        -{Stealth[round]},
        draw=red!60, 
        thick,
    },
    label/.style={
        circle, 
        inner sep=0mm, 
        near start, 
        fill=white,
    }]

    \draw[borderless_node] (2.5,-2) -- (4,-2) -- (4,-1) -- (2.5,-1) -- cycle;
    \node at (5, -1.5) {\footnotesize $c \; < \; a \; < \; b \; < \; d$};
    \node[solid_node] (a) at (5,6) {a};
    
    \node[solid_node] (ab) at (1,4) {b};
    \node[solid_node] (ac) at (5,4) {c};
    \node[solid_node] (ad) at (9,4) {d};
    
    \node[solid_node] (abc) at (0,2) {c};
    \node[solid_node] (abd) at (2,2) {d};
    \node[transparent_node] (acb) at (4,2) {b};
    \node[transparent_node] (acd) at (6,2) {d};
    \node[transparent_node] (adb) at (8,2) {b};
    \node[solid_node] (adc) at (10,2) {c};
    
    \node[transparent_node] (abcd) at (0,0) {d};
    \node[transparent_node] (abdc) at (2,0) {c};
    \node[transparent_node] (acbd) at (4,0) {d};
    \node[transparent_node] (acdb) at (6,0) {b};
    \node[transparent_node] (adbc) at (8,0) {c};
    \node[transparent_node] (adcb) at (10,0) {b};
    
    \draw[solid_edge] (a) -- (ab) node[label] {\tiny 1};
    \draw[blue_edge] (a) -- (ac) node[label] {\tiny 3};
    \draw[solid_edge] (a) -- (ad) node[label] {\tiny 2};
    
    \draw[solid_edge] (ab) -- (abc) node[label] {\tiny 2};
    \draw[solid_edge] (ab) -- (abd) node[label] {\tiny 1};
    \draw[transparent_edge] (ac) -- (acb);
    \draw[transparent_edge] (ac) -- (acd);
    \draw[transparent_edge] (ad) -- (adb);
    \draw[solid_edge] (ad) -- (adc) node[label] {\tiny 1};
    
    \draw[transparent_edge] (abc) -- (abcd);
    \draw[transparent_edge] (abd) -- (abdc);
    \draw[transparent_edge] (acb) -- (acbd);
    \draw[transparent_edge] (acd) -- (acdb);
    \draw[transparent_edge] (adb) -- (adbc);
    \draw[transparent_edge] (adc) -- (adcb);

\end{tikzpicture}}
             \caption{}
             \label{fig:gst4}
         \end{subfigure}
         \hfill
         \begin{subfigure}[b]{0.3\textwidth}
             \centering
             \resizebox{\textwidth}{!}{\begin{tikzpicture}[
    scale=0.5,
    borderless_node/.style={
        rectangle, 
        draw=black!5, 
        fill=black!5, 
        rounded corners=1mm, 
        thick, 
        minimum size=5mm
    },
    solid_node/.style={
        rectangle, 
        draw=black!60, 
        fill=black!5, 
        rounded corners=1mm, 
        thick, 
        minimum size=5mm
    },
    transparent_node/.style={
        rectangle, 
        text=black!20,
        draw=black!10, 
        fill=black!5, 
        rounded corners=1mm, 
        thick, 
        minimum size=5mm
    },
    blue_node/.style={
        rectangle, 
        draw=blue!60, 
        fill=blue!5, 
        rounded corners=1mm, 
        thick, 
        minimum size=5mm
    },
    red_node/.style={
        rectangle, 
        draw=red!60, 
        fill=red!5, 
        rounded corners=1mm, 
        thick, 
        minimum size=5mm
    },
    solid_edge/.style={
        -{Stealth[round]},
        draw=black!60, 
        thick,
    },
    transparent_edge/.style={
        -{Stealth[round]},
        draw=black!10, 
        thick,
    },
    blue_edge/.style={
        -{Stealth[round]},
        draw=blue!60, 
        thick,
    },
    red_edge/.style={
        -{Stealth[round]},
        draw=red!60, 
        thick,
    },
    label/.style={
        circle, 
        inner sep=0mm, 
        near start, 
        fill=white,
    }]

    \draw[borderless_node] (2.5,-2) -- (5.4,-2) -- (5.4,-1) -- (2.5,-1) -- cycle;
    \node at (5, -1.5) {\footnotesize $b \; < \; c \; < \; a \; < \; d$};
    \node[solid_node] (a) at (5,6) {a};
    
    \node[solid_node] (ab) at (1,4) {b};
    \node[solid_node] (ac) at (5,4) {c};
    \node[solid_node] (ad) at (9,4) {d};
    
    \node[solid_node] (abc) at (0,2) {c};
    \node[solid_node] (abd) at (2,2) {d};
    \node[transparent_node] (acb) at (4,2) {b};
    \node[transparent_node] (acd) at (6,2) {d};
    \node[transparent_node] (adb) at (8,2) {b};
    \node[solid_node] (adc) at (10,2) {c};
    
    \node[transparent_node] (abcd) at (0,0) {d};
    \node[transparent_node] (abdc) at (2,0) {c};
    \node[transparent_node] (acbd) at (4,0) {d};
    \node[transparent_node] (acdb) at (6,0) {b};
    \node[transparent_node] (adbc) at (8,0) {c};
    \node[transparent_node] (adcb) at (10,0) {b};
    
    \draw[blue_edge] (a) -- (ab) node[label] {\tiny 1};
    \draw[solid_edge] (a) -- (ac) node[label] {\tiny 3};
    \draw[solid_edge] (a) -- (ad) node[label] {\tiny 2};
    
    \draw[blue_edge] (ab) -- (abc) node[label] {\tiny 2};
    \draw[solid_edge] (ab) -- (abd) node[label] {\tiny 1};
    \draw[transparent_edge] (ac) -- (acb);
    \draw[transparent_edge] (ac) -- (acd);
    \draw[transparent_edge] (ad) -- (adb);
    \draw[solid_edge] (ad) -- (adc) node[label] {\tiny 1};
    
    \draw[transparent_edge] (abc) -- (abcd);
    \draw[transparent_edge] (abd) -- (abdc);
    \draw[transparent_edge] (acb) -- (acbd);
    \draw[transparent_edge] (acd) -- (acdb);
    \draw[transparent_edge] (adb) -- (adbc);
    \draw[transparent_edge] (adc) -- (adcb);

\end{tikzpicture}}
             \caption{}
             \label{fig:gst5}
         \end{subfigure}
         \hfill
         \begin{subfigure}[b]{0.3\textwidth}
             \centering
             \resizebox{\textwidth}{!}{\begin{tikzpicture}[
    scale=0.5,
    borderless_node/.style={
        rectangle, 
        draw=black!5, 
        fill=black!5, 
        rounded corners=1mm, 
        thick, 
        minimum size=5mm
    },
    solid_node/.style={
        rectangle, 
        draw=black!60, 
        fill=black!5, 
        rounded corners=1mm, 
        thick, 
        minimum size=5mm
    },
    transparent_node/.style={
        rectangle, 
        text=black!20,
        draw=black!10, 
        fill=black!5, 
        rounded corners=1mm, 
        thick, 
        minimum size=5mm
    },
    blue_node/.style={
        rectangle, 
        draw=blue!60, 
        fill=blue!5, 
        rounded corners=1mm, 
        thick, 
        minimum size=5mm
    },
    red_node/.style={
        rectangle, 
        draw=red!60, 
        fill=red!5, 
        rounded corners=1mm, 
        thick, 
        minimum size=5mm
    },
    solid_edge/.style={
        -{Stealth[round]},
        draw=black!60, 
        thick,
    },
    transparent_edge/.style={
        -{Stealth[round]},
        draw=black!10, 
        thick,
    },
    blue_edge/.style={
        -{Stealth[round]},
        draw=blue!60, 
        thick,
    },
    red_edge/.style={
        -{Stealth[round]},
        draw=red!60, 
        thick,
    },
    label/.style={
        circle, 
        inner sep=0mm, 
        near start, 
        fill=white,
    }]

    \draw[borderless_node] (2.5,-2) -- (6.8,-2) -- (6.8,-1) -- (2.5,-1) -- cycle;
    \node at (5, -1.5) {\footnotesize $c \; < \; b \; < \; d \; < \; a$};
    \node[solid_node] (a) at (5,6) {a};
    
    \node[solid_node] (ab) at (1,4) {b};
    \node[solid_node] (ac) at (5,4) {c};
    \node[solid_node] (ad) at (9,4) {d};
    
    \node[solid_node] (abc) at (0,2) {c};
    \node[solid_node] (abd) at (2,2) {d};
    \node[transparent_node] (acb) at (4,2) {b};
    \node[transparent_node] (acd) at (6,2) {d};
    \node[transparent_node] (adb) at (8,2) {b};
    \node[solid_node] (adc) at (10,2) {c};
    
    \node[transparent_node] (abcd) at (0,0) {d};
    \node[red_node] (abdc) at (2,0) {c};
    \node[transparent_node] (acbd) at (4,0) {d};
    \node[transparent_node] (acdb) at (6,0) {b};
    \node[transparent_node] (adbc) at (8,0) {c};
    \node[transparent_node] (adcb) at (10,0) {b};
    
    \draw[blue_edge] (a) -- (ab) node[label] {\tiny 1};
    \draw[solid_edge] (a) -- (ac) node[label] {\tiny 3};
    \draw[solid_edge] (a) -- (ad) node[label] {\tiny 2};
    
    \draw[solid_edge] (ab) -- (abc) node[label] {\tiny 2};
    \draw[blue_edge] (ab) -- (abd) node[label] {\tiny 1};
    \draw[transparent_edge] (ac) -- (acb);
    \draw[transparent_edge] (ac) -- (acd);
    \draw[transparent_edge] (ad) -- (adb);
    \draw[solid_edge] (ad) -- (adc) node[label] {\tiny 1};
    
    \draw[transparent_edge] (abc) -- (abcd);
    \draw[red_edge] (abd) -- (abdc) node[label] {\tiny $\times$};
    \draw[transparent_edge] (acb) -- (acbd);
    \draw[transparent_edge] (acd) -- (acdb);
    \draw[transparent_edge] (adb) -- (adbc);
    \draw[transparent_edge] (adc) -- (adcb);

\end{tikzpicture}}
             \caption{}
             \label{fig:gst6}
         \end{subfigure}
        \caption{A Grow-shrink Tree}
        \label{fig:gsts}
    \end{figure}
    
    In what follows we run GS on a series of permutations while caching the results in a GST. We describe the nodes of the tree by the sequence of variables traversed to reach them from the root. We use colors to track our progress: light gray parts of the tree denote unexplored paths with no cached scores, dark gray parts of the tree denote explored paths with cached scores. Colored edges denote the paths considered during the execution of GS: red for rejected and blue for accepted. Colored nodes denote the same, but are only colored if they do not have a cached value and require a new score calculation. Numbered edges directed out of a parent node record the order of preference for the children from high to low. To simplify this example, we do not cache the results of shrink. In (\ref{fig:gst1}) we initialize the tree by adding the root node $a$. 
        
    Our first permutation to evaluate is $\langle b, d, a, c \rangle$ which is shown in (\ref{fig:gst2}) and whose minimal subgraph is depicted in (\ref{fig:g2}). Node $a$ has not been expanded, so we score and sort nodes $ab$, $ac$, and $ad$. Node $ab$ scored the highest and is contained in the prefix, so we travel to node $ab$. Node $ab$ has not been expanded, so we score and sort nodes $abc$ and $abd$. Node $abd$ scored the highest and is contained in the prefix, so we travel to node $abd$. At this point no more variables are available in the prefix so we have completed growing. We run shrink at node $abd$ which removes nothing so the GS result is $\{b, d\}$.
    
    Our next permutation to evaluate is $\langle c, d, a, b \rangle$ which is shown in (\ref{fig:gst3}) and whose minimal subgraph is depicted in (\ref{fig:g3}). Node $a$ has already been expanded so we check its outgoing edges. Node $ab$ has the highest score but is not contained in the prefix. Node $ad$ has the second highest score and is contained in the prefix, so we travel to node $ad$. Node $ad$ has not been expanded, so we score and sort nodes $adc$. Node $c$ has the highest score and is contain in the prefix, so we travel to node $adc$. At this point no more variables are available in the prefix so we have completed growing. We run shrink at node $adc$ which removes nothing so the GS result is $\{c, d\}$.
    
    Our next permutation to evaluate is $\langle c, a, b, d \rangle$ which is shown in (\ref{fig:gst4}) whose minimal subgraph is depicted in (\ref{fig:g4}). Node $a$ has already been expanded so we check its outgoing edges. Node $ab$ has the highest score but is not contained in the prefix. Node $ad$ has the second highest score but is not contained in the prefix. Node $ac$ has the third highest score and is contained in the prefix, so we travel to node $ac$. At this point no more variables are available in the prefix so we have completed growing. We run shrink at node $ac$ which removes nothing so the GS result is $\{c\}$.
    
    Our next permutation to evaluate is $\langle b, c, a, d \rangle$ which is shown in (\ref{fig:gst5}) whose minimal subgraph is depicted in (\ref{fig:g5}). Node $a$ has already been expanded so we check its outgoing edges. Node $ab$ has the highest score and is contained in the prefix, so we travel to node $ab$. $ab$ has already been expanded so we check its outgoing edges. Node $abd$ has the highest score but is not contained in the prefix. Node $abc$ has the second highest score and is contained in the prefix, so we travel to node $abc$. At this point no more variables are available in the prefix so we have completed growing. We run shrink at node $abc$ which removes nothing so the GS result is $\{b, c\}$.

    Our last permutation to evaluate is $\langle c, b, d, a \rangle$ which is shown in (\ref{fig:gst6}) and whose minimal subgraph is depicted in (\ref{fig:g1}). Node $a$ has already been expanded so we check its outgoing edges. Node $ab$ has the highest score and is contained in the prefix, so we travel to node $ab$. Node $ab$ has already been expanded so we check its outgoing edges. Node $d$ has the highest score and is contained in the prefix, so we travel to node $abd$. Node $abd$ has not been expanded, so we score node $abdc$. No scores result in an improvement to the overall score so we have completed growing. We run shrink at node $abd$ which removes nothing so the GS result is $\{b, d\}$.

\section{Best Order Score Search}
\label{sec:boss}

    Similar to GES, BOSS uses a two phase search procedure \cite{chickering2002optimal}. The first phase uses the $\mt{best\tu{-}move}$ method, which takes a variable as input and greedily moves it to the position in the current permutation that maximizes the score. In this phase, $\mt{best\tu{-}move}$ is repeatedly applied to each variable, one at a time, until there are no more moves that increase the score. This phase concludes with the $\mt{find\tu{-}compelled}$ procedure of \cite{chickering1995transformational} which converts the DAG into a CPDAG. The second phase of BOSS is $\mt{BES}$ which is exactly second phase of GES. The $\mt{BES}$ step is optional but guarantees asymptotic correctness if executed.

    \begin{minipage}{0.46\textwidth}
        \begin{algorithm}[H]
            \DontPrintSemicolon
            \caption{$\mt{BOSS}(\bm X, \pi, \delta)$}
            \label{alg:boss}
            \KwIn{$\mt{data}: \bm X \s[9] \mt{perm}: \pi \s[9] \mt{flag}: \delta$}
            \KwOut{$\mt{graph}: \mc G$}
            
            $\mc T \at \mt{GST}(\bm X)$\;
            \Repeat{$\mt{best} = \mc T.\s[1]\mt{score}(\pi)$}{
                $\mt{best} \at \mc T.\s[1]\mt{score}(\pi)$\;
                \ForEach{$v \in \pi$}{
                    $\pi \at \s[1]\mt{best\tu{-}move}(\mc T, \pi, v)$\;
                }
            }
            $\mc G \at \mc T.\s[1]\mt{project}(\pi)$\;
            $\mc G \at \mt{find\tu{-}compelled}(\mc G)$\;

            \If{$\delta = \mt{true}$}{
                $\mc G \at \mt{BES}(\mc G, \bm X)$\;
            }            
        \end{algorithm}
    \end{minipage}
    \begin{minipage}{0.03\textwidth}
        \hfill
    \end{minipage}
    \begin{minipage}{0.46\textwidth}
        \begin{algorithm}[H]
            \DontPrintSemicolon
            \caption{$\s[1]\mt{best\tu{-}move}(\mc T, \pi, v)$}
            \label{alg:best}
            \KwIn{$\mt{GSTs}: \mc T \s[9] \mt{perm}: \pi \s[9] \mt{var}: v$ }
            \KwOut{$\mt{perm}: \pi$}
            
            $\mt{best} \at \mc T.\s[1]\mt{score}(\pi)$\;
            \For{$i \gets 1$ \KwTo $|\pi|$}{
                $j \at \pi\s[1].\mt{index}(v)$\;
                $\pi \at \pi.\s[1]\mt{move}(v, i)$\;
                \eIf{$\mt{best} < \mc T.\s[1]\mt{score}(\pi)$} {
                    $\mt{best} \at \mc T.\s[1]\mt{score}(\pi)$\;
                } {
                    $\pi \at \pi.\s[1]\mt{move}(v, j)$\;
                }
            }            
        \end{algorithm}
        \vspace{8mm}
    \end{minipage}

    \begin{prop}
        Let $P$ be a member of a curved exponential family satisfying causal Markov and causal faithfulness. If $\bm X \overset{\tu{iid}}{\sim} P$ then $\mt{BOSS}(\bm X, \pi, \mt{true})$ returns the MEC of the causal DAG for all initial permutation $\pi$ in the large sample limit.
    \end{prop}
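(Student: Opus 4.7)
The plan is to reduce correctness of $\mt{BOSS}$ to correctness of $\mt{BES}$ (Chickering's second phase of GES), using the machinery of Section~\ref{sec:causal_discovery} to show that the CPDAG handed to $\mt{BES}$ is an independence map of $P$ in the large sample limit. The key observation is that the first phase of BOSS need not find the causal DAG, nor even a CPDAG with globally optimal score --- it is only responsible for producing \emph{some} starting point whose DAG model contains $P$, after which $\mt{BES}$ is guaranteed to finish the job. This is what makes the ``for all initial permutations $\pi$'' clause of the proposition harmless.

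First I would verify termination of the outer repeat loop: because $\pi$ ranges over the finite set of permutations and the loop only continues while $\mc T.\mt{score}(\pi)$ strictly increases, the loop must halt. Second, once the loop exits, $\mc G = \mc T.\mt{project}(\pi)$ for the final permutation $\pi$; by the second bullet in Section~\ref{sec:causal_discovery}, $\mc G$ is subgraph minimal in the large sample limit for \emph{every} permutation, and a subgraph minimal DAG contains $P$ by definition. The $\mt{find\tu{-}compelled}$ step produces the CPDAG of $\mc G$, which represents the same MEC and therefore encodes the same conditional independences, so the CPDAG handed to $\mt{BES}$ still contains $P$.

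Third, I would invoke Chickering's analysis of $\mt{BES}$: applied to any CPDAG whose DAG model contains $P$ using a consistent decomposable score, $\mt{BES}$ greedily deletes edges and returns the MEC that maximizes the score among DAG models containing $P$ with minimum parameter dimension. Consistency of BIC on curved exponential families is exactly the proposition of Haughton already cited, and under causal Markov and causal faithfulness the causal DAG is --- up to Markov equivalence --- the unique minimum-dimensional DAG model containing $P$, so $\mt{BES}$ returns its MEC, which is what the proposition claims.

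The main obstacle is the third step: one must invoke the form of Chickering's BES theorem whose hypothesis is only that the input CPDAG is an independence map of $P$, rather than the common formulation that chains BES to the output of the forward phase of GES. A related subtlety is the meaning of the ``large sample limit,'' which is implicitly a uniform-consistency claim across the many BIC evaluations performed inside $\mt{grow}$, $\mt{shrink}$, $\mt{best\tu{-}move}$, and $\mt{BES}$; this uniformity is delivered by the consistency of BIC on curved exponential families, and any formal write-up would spell it out rather than leaving it implicit.
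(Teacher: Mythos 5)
Your proposal is correct and follows essentially the same route as the paper's (much terser) proof: the projected DAG is subgraph minimal for any permutation, hence contains $P$, and correctness then follows from the correctness of $\mt{BES}$ with a consistent score. Your additional remarks on loop termination, on $\mt{find\tu{-}compelled}$ preserving the MEC, and on needing the version of Chickering's $\mt{BES}$ result whose hypothesis is only that the input contains $P$ are exactly the details the paper leaves implicit.
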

        
    \begin{proof}
        Since $\mt{project}$ returns a subgraph minimal DAG, it contains $P$. Accordingly, asymptotic correctness follows from the correctness of $\mt{BES}$.
    \end{proof}

    In Algorithm \ref{alg:boss} ($\mt{BOSS}$) and Algorithm \ref{alg:best}, $\mt{GST}$ constructs a collection of GSTs, denoted $\mc T$, which contains one GST for each variable. The collection $\mc T$ is equipped with $\mt{project}$ and $\mt{score}$ methods which runs the GS algorithm to project and score a permutation, respectively.

\section{Simulations}
\label{sec:simulations}

    We evaluated the speed and performance of $\mt{BOSS}$ on simulated data compared to other algorithms: $\mt{GRaSP}$, $\mt{fGES}$, $\mt{PC}$, $\mt{DAGMA}$, and $\mt{LiNGAM}$. Our evaluation used the performance metrics tabulated in Table \ref{tab:metrics} which were originally proposed by \cite{kummerfeld2023power}. All algorithms were run on an Apple M1 Pro processor with 16G of RAM. The results reported in the main text are abridged but the complete results are available in the Supplement.

    \begin{minipage}{0.5\textwidth}
        \begin{table}[H]
            \caption{Metrics}
            \vskip 3mm
            \label{tab:metrics}
            \centering
            \small
            \begin{tabular}{cccc}
                \toprule
                True & Estimated & Adjacency & Orientation \\
                \cmidrule(lr){1-1}
                \cmidrule(lr){2-2}
                \cmidrule(lr){3-3}
                \cmidrule(lr){4-4}
                \multirow{4}{*}{$a \at b$} & $a \at b$ & $\mt{tp}$ & $\mt{tp}, \mt{tn}$ \\
                & $a \ta b$ & $\mt{tp}$ & $\mt{fp}, \mt{fn}$ \\
                & $a - \s[-15] - \s[6] b$ & $\mt{tp}$ & $\mt{fn}$ \\
                & $a \dots b$ & $\mt{fn}$ & $\mt{fn}$ \\
                \cmidrule(lr){1-1}
                \cmidrule(lr){2-2}
                \cmidrule(lr){3-3}
                \cmidrule(lr){4-4}
                \multirow{4}{*}{$a \dots b$} & $a \at b$ & $\mt{fp}$ & $\mt{fp}$ \\
                & $a \ta b$ & $\mt{fp}$ & $\mt{fp}$ \\
                & $a - \s[-15] - \s[6] b$ & $\mt{fp}$ & \\
                & $a \dots b$ & $\mt{tn}$ & \\
                \bottomrule
            \end{tabular}
        \end{table}
    \end{minipage}
    \begin{minipage}{0.03\textwidth}
        \hfill
    \end{minipage}
    \begin{minipage}{0.4\textwidth}
        \vspace{20mm}
        \[
            \text{Precision} = \frac{\mt{tp}}{\mt{tp} + \mt{fp}}
        \]
        \vskip 5mm
        \[
            \text{Recall} = \frac{\mt{tp}}{\mt{tp} + \mt{fn}}
        \]
        \vspace{8mm}
    \end{minipage}

    We evaluated our implementation of $\mt{BOSS}$ using a BIC score with $\lambda = 2$ and no BES step as it did not appear to improve performance. For $\mt{GRaSP}$, we modified (to use GSTs) and used the TETRAD implementation with the same parameters as the authors and a BIC score with $\lambda = 2$. \cite{lam2022greedy, ramsey2018tetrad}. For $\mt{fGES}$ we used the TETRAD implementation with default parameters and a BIC score with $\lambda = 2$ \cite{chickering2002optimal, ramsey2017million, ramsey2018tetrad}. We also used the implementation of $\mt{PC}$ in TETRAD with default parameters using a BIC score with $\lambda = 2$ as a conditional independence oracle \cite{ramsey2018tetrad, spirtes2000causation}. For $\mt{DAGMA}$ we used the authors' Python implementation with the parameters reported in their paper but changed the threshold to 0.1 \cite{bello2022dagma} and used the technique described in \cite{ng2020role} to resolve cycles. Moreover, the output of $\mt{DAGMA}$ was converted to a CPDAG for linear Gaussian simulations. For $\mt{LiNGAM}$ we used the authors' Python implementation with default parameters \cite{ikeuchi2023python, shimizu2011directlingam}.

    We generated Erd\H{o}s-R{\'e}nyi DAGs by applying an arbitrary order to vertices of an Erd\H{o}s-R{\'e}nyi graph. For scale-free networks, first we generated an Erd\H{o}s-R{\'e}nyi DAG and then redraw the parents of each vertex according to the Barab{\'a}si-Albert model \cite{barabasi1999emergence}. This resamples the out-degrees distribution to be scale-free while keeping in-degree distribution constant. This procedure was motivated by the results in Figure \ref{fig:rsfrmi}. Edge weights were sampled from a uniform distribution in the interval [-1.0, 1.0] and error distributions were generated with standard deviations sampled from a uniform distribution in the interval [1.0, 2.0]. The complete simulation details including, figures plotting the simulated scale-free in/out degree distributions, are include in the Supplement. Additionally, the data are available for download at: \texttt{https://github.com/cmu-phil/boss}.

    Figure \ref{fig:er_sf} compares $\mt{BOSS}$ against $\mt{GRaSP}$, $\mt{fGES}$, $\mt{PC}$, and $\mt{DAGMA}$ on linear Gaussian data generated from (\ref{fig:er}) Erd\H{o}s-R{\'e}nyi and (\ref{fig:sf}) scale-free networks. The output of $\mt{DAGMA}$ was converted to a CPDAG for these simulations. \citet{lam2022greedy} attribute the excellent performance of $\mt{GRaSP}$ to it being robust against the ubiquity of almost-violations of causal faithfulness \cite{zhang2008detection, uhler2013geometry}. Due to the algorithmic and performance parallels between $\mt{GRaSP}$ and $\mt{BOSS}$, we conjecture that a similar argument could be made for $\mt{BOSS}$.

    Figure \ref{fig:ng} compares $\mt{BOSS}$ against $\mt{DAGMA}$ and $\mt{LiNGAM}$ on linear exponential and linear Gumbel data from scale-free networks. Interestingly, $\mt{LiNGAM}$, which takes advantage of non-Gaussian signal in data, does not perform appreciably better than $\mt{BOSS}$.

    Figure \ref{fig:hd} compares $\mt{BOSS}$ against $\mt{GRaSP}$ on linear Gaussian data generated from scale-free networks with a focus on scalability. Here we see that $\mt{BOSS}$ maintains a high level of accuracy while scaling much better than $\mt{GRaSP}$. It is also important to note that nearly all of these simulations were infeasible for $\mt{GRaSP}$ before implementing it with GSTs. Table \ref{tab:dg}, the corresponding table, only reports results for $\mt{BOSS}$ since the two algorithms have nearly identical performance on all statistics except for running time. Full results are tabulated in the Supplement.

   \begin{figure}
        \centering
        \begin{subfigure}[b]{0.45\textwidth}
            \centering
            \begin{tikzpicture}[scale=0.8]
                \begin{groupplot}[
                    group style={
                        group size=2 by 2,
                        group name=avgDeg,
                        x descriptions at=edge bottom,
                        y descriptions at=edge left,
                        horizontal sep=2mm,
                        vertical sep=2mm
                    },
                    xlabel=Average Degree,
                    ymin=-.1, ymax=1.1,
                    grid=both,
                    xlabel style={yshift=1mm},
                    ylabel style={yshift=-1mm},
                    width=5cm
                ]
                
                    \nextgroupplot[ylabel=Precision]
                    \addplot[mark=square, smooth, thick, color=color0] table[x=avg_deg, y=normal_boss_adj_pre]{results/er_texd_results.txt}; \label{er1}
                    \addplot[mark=x, smooth, thick, color=color1] table[x=avg_deg, y=normal_grasp_adj_pre]{results/er_texd_results.txt}; \label{er2}
                    \addplot[mark=+, smooth, thick, color=color2] table[x=avg_deg, y=normal_fges_adj_pre]{results/er_texd_results.txt}; \label{er3}
                    \addplot[mark=triangle, smooth, thick, color=color3] table[x=avg_deg, y=normal_pc_adj_pre]{results/er_texd_results.txt}; \label{er4}
                    \addplot[mark=o, smooth, thick, color=color4] table[x=avg_deg, y=normal_dagma_0.1_adj_pre]{results/er_texd_results.txt}; \label{er5}
                    
                    \nextgroupplot
                    \addplot[mark=square, smooth, thick, color=color0] table[x=avg_deg, y=normal_boss_ori_pre]{results/er_texd_results.txt};
                    \addplot[mark=x, smooth, thick, color=color1] table[x=avg_deg, y=normal_grasp_ori_pre]{results/er_texd_results.txt};
                    \addplot[mark=+, smooth, thick, color=color2] table[x=avg_deg, y=normal_fges_ori_pre]{results/er_texd_results.txt};
                    \addplot[mark=triangle, smooth, thick, color=color3] table[x=avg_deg, y=normal_pc_ori_pre]{results/er_texd_results.txt};
                    \addplot[mark=o, smooth, thick, color=color4] table[x=avg_deg, y=normal_dagma_0.1_ori_pre]{results/er_texd_results.txt};
                    
                    \nextgroupplot[ylabel=Recall]
                    \addplot[mark=square, smooth, thick, color=color0] table[x=avg_deg, y=normal_boss_adj_rec]{results/er_texd_results.txt};
                    \addplot[mark=x, smooth, thick, color=color1] table[x=avg_deg, y=normal_grasp_adj_rec]{results/er_texd_results.txt};
                    \addplot[mark=+, smooth, thick, color=color2] table[x=avg_deg, y=normal_fges_adj_rec]{results/er_texd_results.txt};
                    \addplot[mark=triangle, smooth, thick, color=color3] table[x=avg_deg, y=normal_pc_adj_rec]{results/er_texd_results.txt};
                    \addplot[mark=o, smooth, thick, color=color4] table[x=avg_deg, y=normal_dagma_0.1_adj_rec]{results/er_texd_results.txt};
                    
                    \nextgroupplot
                    \addplot[mark=square, smooth, thick, color=color0] table[x=avg_deg, y=normal_boss_ori_rec]{results/er_texd_results.txt};
                    \addplot[mark=x, smooth, thick, color=color1] table[x=avg_deg, y=normal_grasp_ori_rec]{results/er_texd_results.txt};
                    \addplot[mark=+, smooth, thick, color=color2] table[x=avg_deg, y=normal_fges_ori_rec]{results/er_texd_results.txt};
                    \addplot[mark=triangle, smooth, thick, color=color3] table[x=avg_deg, y=normal_pc_ori_rec]{results/er_texd_results.txt};
                    \addplot[mark=o, smooth, thick, color=color4] table[x=avg_deg, y=normal_dagma_0.1_ori_rec]{results/er_texd_results.txt};
                    
                \end{groupplot}
                \node[above=1mm of avgDeg c1r1, scale=0.8] {Adjacency};
                \node[above=1mm of avgDeg c2r1, scale=0.8] {Orientation};
                \node[fill=white, draw=white, scale=0.8] at (3.2, -4.5) {
                    \small
                    \begin{tabular}{c c c c c c}
                        $\mt{BOSS}$ & \ref*{er1} & $\mt{GRaSP}$ & \ref*{er2} & $\mt{fGES}$ & \ref*{er3} \\
                        $\mt{PC}$ & \ref*{er4} & $\mt{DAGMA}$ & \ref*{er5}
                    \end{tabular}
                };
            \end{tikzpicture}
            \caption{Erd\H{o}s-R{\'e}nyi}
            \label{fig:er}
        \end{subfigure}
            \hfill
        \begin{subfigure}[b]{0.45\textwidth}
            \centering
            \begin{tikzpicture}[scale=0.8]
                \begin{groupplot}[
                    group style={
                        group size=2 by 2,
                        group name=avgDeg,
                        x descriptions at=edge bottom,
                        y descriptions at=edge left,
                        horizontal sep=2mm,
                        vertical sep=2mm
                    },
                    xlabel=Average Degree,
                    ymin=-.1, ymax=1.1,
                    grid=both,
                    xlabel style={yshift=1mm},
                    ylabel style={yshift=-1mm},
                    width=5cm
                ]
                
                    \nextgroupplot[ylabel=Precision]
                    \addplot[mark=square, smooth, thick, color=color0] table[x=avg_deg, y=normal_boss_adj_pre]{results/sf_texd_results.txt}; \label{sf1}
                    \addplot[mark=x, smooth, thick, color=color1] table[x=avg_deg, y=normal_grasp_adj_pre]{results/sf_texd_results.txt}; \label{sf2}
                    \addplot[mark=+, smooth, thick, color=color2] table[x=avg_deg, y=normal_fges_adj_pre]{results/sf_texd_results.txt}; \label{sf3}
                    \addplot[mark=triangle, smooth, thick, color=color3] table[x=avg_deg, y=normal_pc_adj_pre]{results/sf_texd_results.txt}; \label{sf4}
                    \addplot[mark=o, smooth, thick, color=color4] table[x=avg_deg, y=normal_dagma_0.1_adj_pre]{results/sf_texd_results.txt}; \label{sf5}
                    
                    \nextgroupplot
                    \addplot[mark=square, smooth, thick, color=color0] table[x=avg_deg, y=normal_boss_ori_pre]{results/sf_texd_results.txt};
                    \addplot[mark=x, smooth, thick, color=color1] table[x=avg_deg, y=normal_grasp_ori_pre]{results/sf_texd_results.txt};
                    \addplot[mark=+, smooth, thick, color=color2] table[x=avg_deg, y=normal_fges_ori_pre]{results/sf_texd_results.txt};
                    \addplot[mark=triangle, smooth, thick, color=color3] table[x=avg_deg, y=normal_pc_ori_pre]{results/sf_texd_results.txt};
                    \addplot[mark=o, smooth, thick, color=color4] table[x=avg_deg, y=normal_dagma_0.1_ori_pre]{results/sf_texd_results.txt};
                    
                    \nextgroupplot[ylabel=Recall]
                    \addplot[mark=square, smooth, thick, color=color0] table[x=avg_deg, y=normal_boss_adj_rec]{results/sf_texd_results.txt};
                    \addplot[mark=x, smooth, thick, color=color1] table[x=avg_deg, y=normal_grasp_adj_rec]{results/sf_texd_results.txt};
                    \addplot[mark=+, smooth, thick, color=color2] table[x=avg_deg, y=normal_fges_adj_rec]{results/sf_texd_results.txt};
                    \addplot[mark=triangle, smooth, thick, color=color3] table[x=avg_deg, y=normal_pc_adj_rec]{results/sf_texd_results.txt};
                    \addplot[mark=o, smooth, thick, color=color4] table[x=avg_deg, y=normal_dagma_0.1_adj_rec]{results/sf_texd_results.txt};
                    
                    \nextgroupplot
                    \addplot[mark=square, smooth, thick, color=color0] table[x=avg_deg, y=normal_boss_ori_rec]{results/sf_texd_results.txt};
                    \addplot[mark=x, smooth, thick, color=color1] table[x=avg_deg, y=normal_grasp_ori_rec]{results/sf_texd_results.txt};
                    \addplot[mark=+, smooth, thick, color=color2] table[x=avg_deg, y=normal_fges_ori_rec]{results/sf_texd_results.txt};
                    \addplot[mark=triangle, smooth, thick, color=color3] table[x=avg_deg, y=normal_pc_ori_rec]{results/sf_texd_results.txt};
                    \addplot[mark=o, smooth, thick, color=color4] table[x=avg_deg, y=normal_dagma_0.1_ori_rec]{results/sf_texd_results.txt};
                    
                \end{groupplot}
                \node[above=1mm of avgDeg c1r1, scale=0.8] {Adjacency};
                \node[above=1mm of avgDeg c2r1, scale=0.8] {Orientation};
                \node[fill=white, draw=white, scale=0.8] at (3.2, -4.5) {
                    \small
                    \begin{tabular}{c c c c c c}
                        $\mt{BOSS}$ & \ref*{sf1} & $\mt{GRaSP}$ & \ref*{sf2} & $\mt{fGES}$ & \ref*{sf3} \\
                        $\mt{PC}$ & \ref*{sf4} & $\mt{DAGMA}$ & \ref*{sf5}
                    \end{tabular}
                };
            \end{tikzpicture}
            \caption{Scale-free}
            \label{fig:sf}
        \end{subfigure}
        \caption{Mean statistics over 20 repetitions: 100 variables and sample size 1,000.}
        \label{fig:er_sf}
    \end{figure}
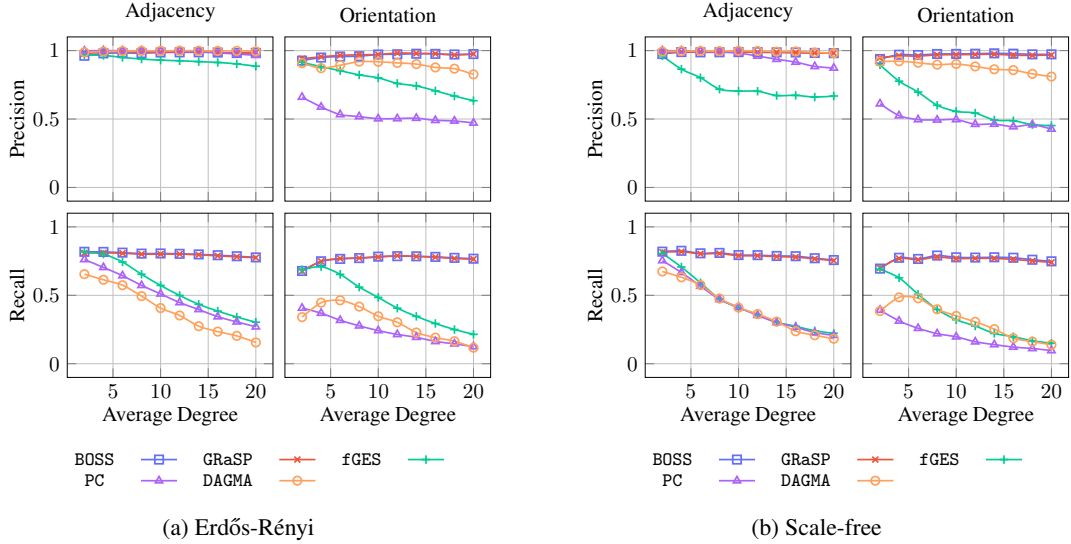

   \begin{figure}
        \centering
        \begin{subfigure}[b]{0.45\textwidth}
            \centering
            \begin{tikzpicture}[scale=0.8]
                \begin{groupplot}[
                    group style={
                        group size=2 by 2,
                        group name=avgDeg,
                        x descriptions at=edge bottom,
                        y descriptions at=edge left,
                        horizontal sep=2mm,
                        vertical sep=2mm
                    },
                    xlabel=Average Degree,
                    ymin=-.1, ymax=1.1,
                    grid=both,
                    xlabel style={yshift=1mm},
                    ylabel style={yshift=-1mm},
                    width=5cm
                ]
                
                    \nextgroupplot[ylabel=Precision]
                    \addplot[mark=square, smooth, thick, color=color0] table[x=avg_deg, y=gumbel_boss_adj_pre]{results/ng_texd_results.txt}; \label{ng1}
                    \addplot[mark=o, smooth, thick, color=color4] table[x=avg_deg, y=gumbel_dagma_0.1_adj_pre]{results/ng_texd_results.txt}; \label{ng2}
                    \addplot[mark=diamond, smooth, thick, color=color5] table[x=avg_deg, y=gumbel_lingam_adj_pre]{results/ng_texd_results.txt}; \label{ng3}
                    \addplot[mark=square, mark options=solid, smooth, thick, dotted, color=color0] table[x=avg_deg, y=exponential_boss_adj_pre]{results/ng_texd_results.txt}; \label{ng4}
                    \addplot[mark=o, mark options=solid, smooth, thick, dotted, color=color4] table[x=avg_deg, y=exponential_dagma_0.1_adj_pre]{results/ng_texd_results.txt}; \label{ng5}
                    \addplot[mark=diamond, mark options=solid, smooth, thick, dotted, color=color5] table[x=avg_deg, y=exponential_lingam_adj_pre]{results/ng_texd_results.txt}; \label{ng6}
                    
                    \nextgroupplot
                    \addplot[mark=square, smooth, thick, color=color0] table[x=avg_deg, y=gumbel_boss_ori_pre]{results/ng_texd_results.txt};
                    \addplot[mark=o, smooth, thick, color=color4] table[x=avg_deg, y=gumbel_dagma_0.1_ori_pre]{results/ng_texd_results.txt};
                    \addplot[mark=diamond, smooth, thick, color=color5] table[x=avg_deg, y=gumbel_lingam_ori_pre]{results/ng_texd_results.txt};
                    \addplot[mark=square, mark options=solid, smooth, thick, dotted, color=color0] table[x=avg_deg, y=exponential_boss_ori_pre]{results/ng_texd_results.txt};
                    \addplot[mark=o, mark options=solid, smooth, thick, dotted, color=color4] table[x=avg_deg, y=exponential_dagma_0.1_ori_pre]{results/ng_texd_results.txt};
                    \addplot[mark=diamond, mark options=solid, smooth, thick, dotted, color=color5] table[x=avg_deg, y=exponential_lingam_ori_pre]{results/ng_texd_results.txt};
    
                    \nextgroupplot[ylabel=Recall]
                    \addplot[mark=square, smooth, thick, color=color0] table[x=avg_deg, y=gumbel_boss_adj_rec]{results/ng_texd_results.txt};
                    \addplot[mark=o, smooth, thick, color=color4] table[x=avg_deg, y=gumbel_dagma_0.1_adj_rec]{results/ng_texd_results.txt};
                    \addplot[mark=diamond, smooth, thick, color=color5] table[x=avg_deg, y=gumbel_lingam_adj_rec]{results/ng_texd_results.txt};
                    \addplot[mark=square, mark options=solid, smooth, thick, dotted, color=color0] table[x=avg_deg, y=exponential_boss_adj_rec]{results/ng_texd_results.txt};
                    \addplot[mark=o, mark options=solid, smooth, thick, dotted, color=color4] table[x=avg_deg, y=exponential_dagma_0.1_adj_rec]{results/ng_texd_results.txt};
                    \addplot[mark=diamond, mark options=solid, smooth, thick, dotted, color=color5] table[x=avg_deg, y=exponential_lingam_adj_rec]{results/ng_texd_results.txt};
    
                    \nextgroupplot
                    \addplot[mark=square, smooth, thick, color=color0] table[x=avg_deg, y=gumbel_boss_ori_rec]{results/ng_texd_results.txt};
                    \addplot[mark=o, smooth, thick, color=color4] table[x=avg_deg, y=gumbel_dagma_0.1_ori_rec]{results/ng_texd_results.txt};
                    \addplot[mark=diamond, smooth, thick, color=color5] table[x=avg_deg, y=gumbel_lingam_ori_rec]{results/ng_texd_results.txt};
                    \addplot[mark=square, mark options=solid, smooth, thick, dotted, color=color0] table[x=avg_deg, y=exponential_boss_ori_rec]{results/ng_texd_results.txt};
                    \addplot[mark=o, mark options=solid, smooth, thick, dotted, color=color4] table[x=avg_deg, y=exponential_dagma_0.1_ori_rec]{results/ng_texd_results.txt};
                    \addplot[mark=diamond, mark options=solid, smooth, thick, dotted, color=color5] table[x=avg_deg, y=exponential_lingam_ori_rec]{results/ng_texd_results.txt};
                \end{groupplot}
                
                \node[above=1mm of avgDeg c1r1, scale=0.8] {Adjacency};
                \node[above=1mm of avgDeg c2r1, scale=0.8] {Orientation};
                \node[fill=white, draw=white, scale=0.8] at (3.2, -4.5) {
                    \small
                    \begin{tabular}{r c c c c c c}
                        $\mt{Gumbel}:$ & $\mt{BOSS}$ & \ref*{ng1} & $\mt{DAGMA}$ & \ref*{ng2} & $\mt{LiNGAM}$ & \ref*{ng3} \\
                        $\mt{Exponential}:$ & $\mt{BOSS}$ & \ref*{ng4} & $\mt{DAGMA}$ & \ref*{ng5} & $\mt{LiNGAM}$ & \ref*{ng6}
                    \end{tabular}
                };
            \end{tikzpicture}
            \caption{Non-Gaussian}
            \label{fig:ng}
        \end{subfigure}
            \hfill
        \begin{subfigure}[b]{0.45\textwidth}
            \centering
            \begin{tikzpicture}[scale=0.8]
                \begin{axis}[
                    xlabel=Measured Variables,
                    ylabel=Seconds,
                    ymin=1, ymax=10000,
                    grid=both, ymode=log,
                    xlabel style={yshift=1mm},
                    ylabel style={yshift=-1mm},
                    width=7cm
                ]
                
                    \addplot[mark=square, smooth, thick, color=color0]
                    	coordinates {(100, 17.74) (200, 114.59) (300, 337.31) (400, 662.71) (500, 1012.22) (600, 1483.88) (700, 2453.32) (800, 3083.87) (900, 4051.92) (1000, 5113.12)}; \label{hd1}
                    \addplot[mark=square, mark options=solid, smooth, thick, dashdotted, color=color0]
                    	coordinates {(100, 5.16) (200, 27.41) (300, 71.44) (400, 133.17) (500, 214.01) (600, 320.71) (700, 456.97) (800, 702.54) (900, 880.03) (1000, 1113.69)}; \label{hd2}
                    \addplot[mark=square, mark options=solid, smooth, thick, dotted, color=color0]
                    	coordinates {(100, 1.74) (200, 8.74) (300, 21.79) (400, 43.28) (500, 77.40) (600, 117.72) (700, 178.79) (800, 282.38) (900, 471.74) (1000, 599.24)}; \label{hd4}
                    \addplot[mark=x, smooth, thick, color=color1]
                    	coordinates {(100, 29.21) (200, 226.31) (300, 689.65) (400, 1382.63) (500, 2331.55) (600, 3755.78) (700, 5758.49)}; \label{hd5}
                    \addplot[mark=x, mark options=solid, smooth, thick, dashdotted, color=color1]   
                        coordinates {(100, 9.94) (200, 59.14) (300, 155.94) (400, 330.64) (500, 637.89) (600, 1018.19) (700, 1430.72) (800, 2103.76) (900, 2832.26) (1000, 3730.44)}; \label{hd6}
                    \addplot[mark=x, mark options=solid, smooth, thick, dotted, color=color1]   
                        coordinates {(100, 2.61) (200, 16.02) (300, 59.82) (400, 114.04) (500, 233.62) (600, 427.71) (700, 698.20) (800, 1052.97) (900, 1508.55) (1000, 2176.70)}; \label{hd8}
                \end{axis}
                    
                \node[fill=white, draw=white, scale=0.8] at (2.6, -1.8) {
                    \small
                    \begin{tabular}{r c c c}
                        $\mt{BIC} \s[3] \lambda:$ & 2 & 4 & 8 \\
                        \cmidrule(lr){1-1} \cmidrule(lr){2-4}
                        $\mt{BOSS}:$ & \ref*{hd1} & \ref*{hd2} & \ref*{hd4} \\
                        $\mt{GRaSP}:$ & \ref*{hd5} & \ref*{hd6} & \ref*{hd8} \\
                    \end{tabular}
                };
            \end{tikzpicture}
            \caption{High Dimension}
            \label{fig:hd}
        \end{subfigure}
        \caption{Mean statistics over 20 repetitions: scale-free, 100 variables, average degree 20 (when not varied), and sample size 1,000.}
        \label{fig:ng_hd}
    \end{figure}
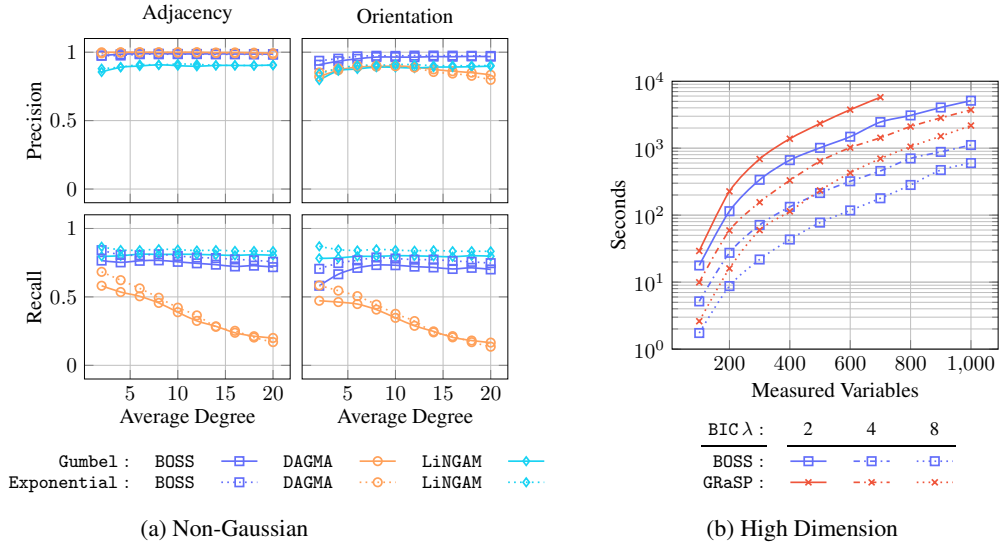

    \begin{table}
        \caption{Mean statistics over 10 repetitions: scale-free, average degree 20, and sample size 1,000.}
        \vskip 3mm 
        \label{tab:dg}
        \centering
        \small
        \begin{tabular}{ccccccc}
            \toprule
            Variables & Algorithm & $\mt{BIC} \s[3] \lambda$ & Adj Pre & Adj Rec & Ori Pre & Ori Rec \\\cmidrule(lr){1-1}
            \cmidrule(lr){2-3}
            \cmidrule(lr){4-7}
            & & 2 & 0.98 & 0.80 & 0.97 & 0.80 \\
            500 & $\mt{BOSS}$ & 4 & 1.00 & 0.72 & 0.99 & 0.72 \\
            & & 8 & 1.00 & 0.57 & 0.99 & 0.57 \\
            \cmidrule(lr){1-1}
            \cmidrule(lr){2-3}
            \cmidrule(lr){4-7}
            & & 2 & 0.97 & 0.80 & 0.97 & 0.80 \\
            1000 & $\mt{BOSS}$ & 4 & 1.00 & 0.73 & 1.00 & 0.72 \\
            & & 8 & 1.00 & 0.59 & 1.00 & 0.58 \\
            \bottomrule
        \end{tabular} 
    \end{table}

\section{Validation on fMRI}
\label{sec:fmri}

    The high spatial coverage of fMRI’s has allowed researchers to study brain function at different scales, from voxels to cortical parcellations to functional systems (such as default mode or visual systems). Given the potential differences in spatial dimension, the expected connectivity density of functional brain networks remains unclear. For example, even after keeping only the 75th percentile stronger connections, structural connectivity networks from 90 cortical regions \cite{vskoch2022human} had 1012 connections on average. Since structural connectivity supports functional connectivity, we could expect empirical functional connectivity networks to be in that order of magnitude. Previous studies applying causal discovery methods to fMRI simulated networks with a high number of variables and connections have shown that while the adjacency recovery precision of these methods can be very high, they usually show a low adjacency recall \cite{ramsey2021improving, sanchez2019estimating}. Despite possibly having low recall, the limited applications of causal discovery methods to real world data \cite{camchong2023frontal, rawls2022resting} often recover models with average degree greater than 20. Considering this limitation and the likelihood that real fMRI brain networks have high average connectivity (degree), causal discovery methods capable of reducing the number of false-negative connections will substantially improve future analysis of fMRI data.

    To demonstrate its practical utility in this important real-world domain, we apply $\mt{BOSS}$ to two types of resting-state fMRI data: simulated data with pseudo-empirical noise distributions derived from randomized empirical fMRI cortical signals and clinical data from 3T fMRI scans processed into cortical parcels.

\subsection{Simulated fMRI}
\label{sec:simulated_fmri}

    We simulated fMRI data following the approach in \cite{sanchez2021combining}. Networks were based on a directed random graphical model that prefers common causes and causal chains over colliders. 40 networks were simulated with 200 variables and an average degree of 10. Edge weights were sampled from a uniform distribution in the interval $[0.1, 0.4]$, randomly setting $10\%$ of the coefficients to their negative value. Pseudo-empirical noise terms were produced by randomizing fMRI resting-state data across data points, regions, and participants from the Human Connectome Project (HCP). Using pseudo-empirical terms better captures the marginal distributional properties of the empirical fMRI. One thousand data points were generated from this procedure. These data are available for download at: \texttt{https://github.com/cmu-phil/boss}.
    
    Accuracy and timing results are shown in Table \ref{tab:fmri_sim} for $\mt{BOSS}$, $\mt{fGES}$, $\mt{DAGMA}$, and $\mt{LiNGAM}$. These results show that $\mt{BOSS}$ has by far the best BIC score of the group, compared to the BIC score of the true model, and that the running time of $\mt{BOSS}$ is very reasonable for a problem of this size. Precisions and recalls for $\mt{BOSS}$ are quite high, for both adjacencies and orientations. The poor performance of $\mt{LiNGAM}$ is due to insufficient non-Gaussian signal. More details are included in the Supplement.
        
    \begin{table}
        \caption{fMRI simulated data with pseudo-empirical errors}
        \vskip 3mm
        \label{tab:fmri_sim}
        \centering
        \small
        \begin{tabular}{cccccccc}
            \toprule
            Algorithm & Adj Pre & Adj Rec & Ori Pre & Ori Rec & $\Delta \mt{BIC}$ & Edges & Seconds \\
            \cmidrule(lr){1-1}
            \cmidrule(lr){2-5}
            \cmidrule(lr){6-8}
            $\mt{BOSS}$ & 0.99 & 0.94 & 0.96 & 0.90 & 211.79 & 951.40 & 15.46 \\
            $\mt{fGES}$ & 0.97 & 0.60 & 0.70 & 0.43 & 7784.48 & 617.35 & 5.16 \\
            $\mt{DAGMA}$ & 1.00 & 0.69 & 0.98 & 0.67 & 3080.85 & 687.15 & 54.58 \\
            $\mt{LiNGAM}$ & 0.54 & 0.94 & 0.35 & 0.62 & 3868.93 & 1752.75 & 582.03 \\
            \bottomrule
        \end{tabular}
    \end{table}

\subsection{Clinical fMRI}
\label{sec:clinical_fmri}

    We applied BOSS to 171 3-Tesla resting state fMRI scans from patients beginning treatment for alcohol use disorder \cite{camchong2023frontal}. Before applying BOSS, the data were cleaned and parcellated into 379 variables representing biologically interpretable and spatially contiguous regions \cite{glasser2016multi}. More details on the data collection, cleaning, and processing can be found in \cite{camchong2023frontal}. Figure \ref{fig:rsfrmi} reports the in/out-degree distributions of the resulting models. The solid line depicts the median degree across all graphs and the color region shades in the area between the 2.5 and 97.5 percentiles. These plots indicate that the models are scale-free, with a small number of hub vertices with much higher degree than other vertices. Scale-free connectivity is consistent with biological expectations and prior connectivity analysis on fMRI data \cite{rawls2022resting}. More details are included in the Supplement.

    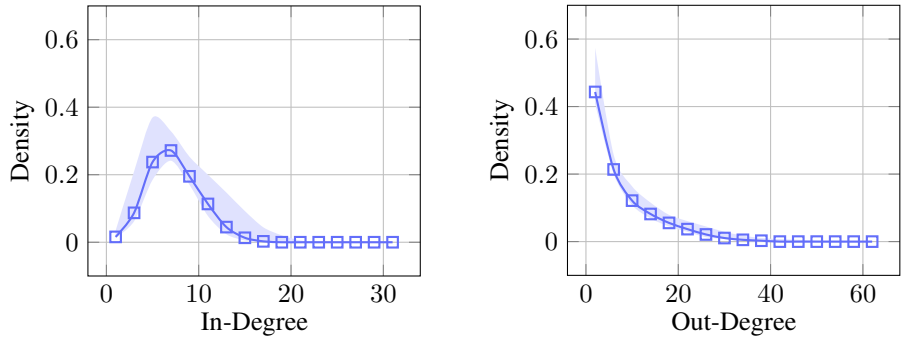
\begin{figure}
        \centering
        \begin{minipage}{0.45\textwidth}
            \begin{tikzpicture}
                \begin{axis}[
                            xlabel=In-Degree,
                            ylabel=Density,
                            ymin=-0.1, ymax=0.7,
                            grid=both,
                            xlabel style={yshift=1mm},
                            ylabel style={yshift=-1mm},
                            width=6cm]
                \addplot[mark=square, smooth, thick, color=color0]
                    coordinates {(1, 0.01583113) (3, 0.08707124) (5, 0.23746702) (7, 0.27176781)
                        (9, 0.19525066) (11, 0.11345646) (13, 0.04485488) (15, 0.01319261)
                        (17, 0.00263852) (19, 0.0) (21, 0.0) (23, 0.0)
                        (25, 0.0) (27, 0.0) (29, 0.0) (31, 0.0)};
                \addplot[name path=lower, fill=none, draw=none, smooth]
                    coordinates {(1, 0.01319261) (3, 0.06200528) (5, 0.18469657) (7, 0.2414248)
                        (9, 0.1701847) (11, 0.07783641) (13, 0.0237467) (15, 0.00527704)
                        (17, 0.0) (19, 0.0) (21, 0.0) (23, 0.0)
                        (25, 0.0) (27, 0.0) (29, 0.0) (31, 0.0)};
                \addplot[name path=upper, fill=none, draw=none, smooth]
                    coordinates {(1, 0.03430079) (3, 0.20976253) (5, 0.36939314) (7, 0.32849604)
                        (9, 0.25329815) (11, 0.19986807) (13, 0.14709763) (15, 0.09234828)
                        (17, 0.04419525) (19, 0.02044855) (21, 0.00527704) (23, 0.00197889)
                        (25, 0.0) (27, 0.0) (29, 0.0) (31, 0.0)};
                \addplot[color0!20] fill between[of=lower and upper];
                \end{axis}
            \end{tikzpicture}
        \end{minipage}
        \begin{minipage}{0.45\textwidth}
            \begin{tikzpicture}
                \begin{axis}[
                            xlabel=Out-Degree,
                            ylabel=Density,
                            ymin=-0.1, ymax=0.7,
                            grid=both,
                            xlabel style={yshift=1mm},
                            ylabel style={yshift=-1mm},
                            width=6cm]                       
                \addplot[mark=square, smooth, thick, color=color0]
                    coordinates {(2, 0.44327177) (6, 0.21372032) (10, 0.12137203) (14, 0.0817942)
                        (18, 0.05540897) (22, 0.03693931) (26, 0.02110818) (30, 0.01055409)
                        (34, 0.00527704) (38, 0.00263852) (42, 0.0) (46, 0.0)
                        (50, 0.0) (54, 0.0) (58, 0.0) (62, 0.0)};
                \addplot[name path=lower, fill=none, draw=none, smooth]
                    coordinates {(2, 0.40897098) (6, 0.2005277) (10, 0.10817942) (14, 0.06860158)
                        (18, 0.04485488) (22, 0.02902375) (26, 0.01319261) (30, 0.00791557)
                        (34, 0.00263852) (38, 0.0) (42, 0.0) (46, 0.0)
                        (50, 0.0) (54, 0.0) (58, 0.0) (62, 0.0)};  
                \addplot[name path=upper, fill=none, draw=none, smooth]
                    coordinates {(2, 0.57387863) (6, 0.25725594) (10, 0.16358839) (14, 0.11543536)
                        (18, 0.07915567) (22, 0.06068602) (26, 0.04485488) (30, 0.02902375)
                        (34, 0.01781003) (38, 0.01055409) (42, 0.00725594) (46, 0.00527704)
                        (50, 0.00263852) (54, 0.0) (58, 0.0) (62, 0.0)};
                \addplot[color0!20] fill between[of=lower and upper];
                \end{axis}
            \end{tikzpicture}
        \end{minipage}
    \caption{In/out-degree distributions on clinical resting state fMRI data.}
    \label{fig:rsfrmi}
    \end{figure}

\section{Discussion}
\label{sec:discussion}

    We have proposed a successor to the GRaSP algorithm \cite{lam2022greedy} that retains its high accuracy but is faster and more scalable. BOSS implemented with GSTs comfortably scales to least 1000 variables with an average degree of at least 20. It can comfortably and informatively analyze data from 400 or even 1000 densely connected fMRI cortical parcellations. We show in simulations that our method is highly accurate for Erd\H{o}s-R{\'e}nyi graphs as well as scale-free graphs. Despite being developed for the linear Gaussian case, BOSS also performs well in the linear non-Gaussian case. BOSS is fast and accurate on simulated fMRI data and can rapidly produce informative and plausible models from clinical fMRI data. BOSS is available for use within the TETRAD project which includes Python and R wrappers \cite{ramsey2018tetrad}.
    
    The success of BOSS presents an opportunity to pursue further theoretical work showing how BOSS differs from GRaSP and what general lessons we may learn for constructing successor algorithms to BOSS. There is substantial room for additional improvements, as BOSS does have several limitations. For example, BOSS cannot handle most forms of unmeasured confounding, so it will be valuable to explore ways of adding this functionality while maintaining its accuracy and scalability. It will also be informative to apply BOSS to other types of data such as functional genomic data, financial data, and electronic health records.

\begin{ack}
    We thank our anonymous reviewers for their detailed and insightful comments. BA was supported by the US National Institutes of Health under the Comorbidity: Substance Use Disorders and Other Psychiatric Conditions Training Program T32DA037183. JR was supported by the US Department of Defense under Contract Number FA8702-15-D-0002 with Carnegie Mellon University for the operation of the Software Engineering Institute. RSR was supported by the US National Institutes of Health under awards R01AG055556 and R01MH109520. JC was supported by the US National Institutes of Health under awards K01AA026349 and UL1TR002494, the UMN Medical Discovery Team on Addiction, and the Westlake Wells Foundation. EK was supported by the US National Institutes of Health under award UL1TR002494. The content of this paper is solely the responsibility of the authors and does not necessarily represent the official views of these funding agencies. All authors have no conflicts of interest to report.
\end{ack}

\newpage

\bibliographystyle{apalike}
\bibliography{refs.bib} 

\end{document}


\maketitle

\section{Simulations}

Linear Gaussian simulations are included to establish the performance of $\mt{BOSS}$ compared to existing causal discovery algorithms on Erd\H{o}s-R{\'e}nyi and scale-free graphs. These results are tabulated in Tables \ref{tab:supp_1}, \ref{tab:supp_2}, \ref{tab:supp_3}, and \ref{tab:supp_4} which compare the following algorithms: 
\begin{itemize}
    \item $\mt{BOSS}$
    \item $\mt{GRaSP}$
    \item $\mt{fGES}$
    \item $\mt{PC}$
    \item $\mt{DAGMA}$ - converted to CPDAG
\end{itemize}
$\mt{LiNGAM}$ was not included in the linear Gaussian comparison because no non-Gaussian signal is available. Linear Non-Gaussian simulations are included to establish the performance of $\mt{BOSS}$ compared to methods that (can) take advantage of non-Gaussian signal. These results are tabulated in Tables \ref{tab:supp_5}, \ref{tab:supp_6}, \ref{tab:supp_7}, and \ref{tab:supp_8} which compare the following algorithms:
\begin{itemize}
    \item $\mt{BOSS}$
    \item $\mt{DAGMA}$
    \item $\mt{LiNGAM}$
\end{itemize}
High dimensional simulations are included to establish the scalability of $\mt{BOSS}$ compared to $\mt{GRaSP}$. These results are tabulated in Tables \ref{tab:supp_9} and \ref{tab:supp_10}.

\subsection{Algorithms}

In what follows, we give a complete list of the tuning parameter settings we used for all algorithms:
\begin{itemize}
   \item $\mt{BOSS}$ - Best Order Score Search
        \begin{itemize}
            \item $\mt{SemBicScore()}$
            \begin{itemize}
                \item $\mt{setPenaltyDiscount(2/4/8)}$
                \item $\mt{setStructurePrior(0)}$
                \item $\mt{setRuleType(SemBicScore.RuleType.CHICKERING)}$
            \end{itemize}
            \item $\mt{setUseBes(False)}$
            \item $\mt{setNumStarts(1)}$
            \item $\mt{setNumThreads(1)}$
            \item $\mt{setUseDataOrder(false)}$
        \end{itemize}
    \item $\mt{GRaSP}$ - Greedy Relaxations of the Sparest Permutation \cite{lam2022greedy, ramsey2018tetrad}
        \begin{itemize}
            \item $\mt{SemBicScore()}$
            \begin{itemize}
                \item $\mt{setPenaltyDiscount(2/4/8)}$
                \item $\mt{setStructurePrior(0)}$
                \item $\mt{setRuleType(SemBicScore.RuleType.CHICKERING)}$
            \end{itemize}
            \item $\mt{setDepth(3)}$
            \item $\mt{setNonSingularDepth(1)}$
            \item $\mt{setUncoveredDepth(1)}$
            \item $\mt{setNumStarts(1)}$
            \item $\mt{setOrdered(false)}$
            \item $\mt{setUseDataOrder(false)}$
        \end{itemize}
    \item $\mt{fGES}$ - fast Greedy Equivalent Search \cite{chickering2002optimal, ramsey2017million, ramsey2018tetrad}
        \begin{itemize}
            \item $\mt{SemBicScore()}$
            \begin{itemize}
                \item $\mt{setPenaltyDiscount(2)}$
                \item $\mt{setStructurePrior(0)}$
                \item $\mt{setRuleType(SemBicScore.RuleType.CHICKERING)}$
            \end{itemize}
            \item $\mt{setFaithfulnessAssumed(false)}$
            \item $\mt{setMaxDegree(-1)}$
        \end{itemize}
    \item $\mt{PC}$ - Peter and Clark \cite{ramsey2018tetrad, spirtes2000causation}
        \begin{itemize}
            \item $\mt{ScoreIndTest(SemBicScore())}$
            \begin{itemize}
                \item $\mt{setPenaltyDiscount(2)}$
                \item $\mt{setStructurePrior(0)}$
                \item $\mt{setRuleType(SemBicScore.RuleType.CHICKERING)}$
            \end{itemize}
            \item $\mt{setDepth(1000)}$
            \item $\mt{setStable(false)}$
            \item $\mt{setConflictRule(ConflictRule.OVERWRITE\_EXISTING)}$
            \item $\mt{setAggressivelyPreventCycles(false)}$
        \end{itemize}
    \item $\mt{DAGMA}$ - DAGs via M-matrices for Acyclicity \cite{bello2022dagma}
        \begin{itemize}
            \item $\mt{loss\_type='l2'}$
            \item $\mt{lambda1=0.1}$
            \item $\mt{w\_threshold=0.1}$
            \item $\mt{T=4}$
            \item $\mt{mu\_init=1.0}$
            \item $\mt{mu\_factor=0.1}$
            \item $\mt{s=[1.0, .9, .8, .7]}$
            \item $\mt{warm\_iter=2e4}$
            \item $\mt{max\_iter=7e4}$
            \item $\mt{lr=0.0003}$
            \item $\mt{checkpoint=1000}$
            \item $\mt{beta\_1=0.99}$
            \item $\mt{beta\_2=0.999}$
        \end{itemize}
    \item $\mt{LiNGAM}$ - Linear Non-Gaussian Acyclic Model \cite{ikeuchi2023python, shimizu2011directlingam}
        \begin{itemize}
            \item $\mt{random\_state=None}$
            \item $\mt{prior\_knowledge=None}$
            \item $\mt{apply\_prior\_knowledge\_softly=False}$
            \item $\mt{measure='pwling'}$
        \end{itemize}
\end{itemize}

\subsection{Data-generation}

Erd\H{o}s-R{\'e}nyi and scale-free graphs were generated according to Algorithms \ref{alg:er_dag} and \ref{alg:sf_dag}, respectively. For scale-free graphs, the parents of an Erd\H{o}s-R{\'e}nyi graph are redrawn to produce a graph whose out-degree follows a power-law. The redrawing process follows a modified Barab{\'a}si-Albert model where the preferential attachment of each potential parent is inflated by one to account for vertices with zero out-degree \cite{barabasi1999emergence}. Figure \ref{fig:in_out_sim} depicts the in/out degree distributions for scale-free graphs (100 vertices, average degree 10) where the solid black line gives the median and the grayed region denotes an empirical 95\% confidence interval. Notably, the in/out-degree for Erd\H{o}s-R{\'e}nyi graphs will both follow the in-degree distribution defined in Figure \ref{fig:in_out_sim}.

Data were generated according to Algorithm \ref{alg:sim} where $\mc U$ denotes a uniform distribution. In our simulations, the error $\mc E$ is either distributed Gaussian, Gumbel, or Exponential. In all cases, the columns of the data matrix were shuffled prior to being passed to a search algorithm, so that the variable order in the dataset does not match the order of data generation.

\begin{algorithm}
    \DontPrintSemicolon
    \caption{$\mt{simulate}(\mc G, \mc E, n)$}
    \label{alg:sim}
    \KwIn{$\mt{DAG}: \mc G = (V, E) \s[9] \mt{error}: \mc E \s[9] \mt{samples}: n$}
    \KwOut{$\mt{data}: \bm X$}

    \ForEach{$v \in V$}{
        $\sigma \sim \mc U(1, 2)$\;
        \ForEach{$w \in \pa{\mc G}{v}$}{
            $\beta \sim \mc U(-1, 1)$\;
            \ForEach{$i \in [ \s[1] n \s[1] ]$}{
                $\bm X_{i,v} \sim \mc E(\sigma)$\;
                $\bm X_{i,v} \at \bm X_{i,v} + \beta \bm X_{i, w}$\;
            }
        }
        $\bm X_v \at \mt{standardize}(\bm X_v)$\;
    }
\end{algorithm}

\begin{algorithm}
    \DontPrintSemicolon
    \caption{$\mt{ER\tu{-}DAG}(V, \pi, \alpha)$}
    \label{alg:er_dag}
    \KwIn{$\mt{vertices}: V \s[12] \mt{perm}: \pi \s[12] \mt{avg \s[5] deg}: \alpha$}
    \KwOut{$\mt{DAG}: \mc G = (V, E)$}
    $E \at \es$ \;
    \Repeat{$|E| = \frac{\alpha}{2} \s[3] |V|$}{
        $(v, w) \sim f : V \s[-3] \times \s[-1] V \ta \mbb R \s[12] \tu{s.t.} \s[12] f \propto 1_{\pre{\pi}{v}}(w)$ \;
        $E \at E \cup \{ (v, w) \}$ \;
    }
\end{algorithm}

\begin{algorithm}
    \DontPrintSemicolon
    \caption{$\mt{SF\tu{-}DAG}(V, \pi, \alpha)$}
    \label{alg:sf_dag}
    \KwIn{$\mt{vertices}: V \s[12] \mt{perm}: \pi \s[12] \mt{avg \s[5] deg}: \alpha$}
    \KwOut{$\mt{DAG}: \mc G = (V, E)$}
    $\mc H \at \mt{ER\tu{-}DAG}(V, \pi, \alpha)$ \;
    $E \at \es$ \;
    \For{$v \in \pi$}{ 
        \Repeat{$|\pa{\mc G}{v}| = |\pa{\mc H}{v}|$}{ 
            $w \sim f : V \ta \mbb R \s[12] \tu{s.t.} \s[12] f \propto 1_{\pre{\pi}{v}}(w) + |\ch{\mc G}{w}|$ \;
            $E \at E \cup \{ (v, w) \}$ \;
        }
    }
    
\end{algorithm}

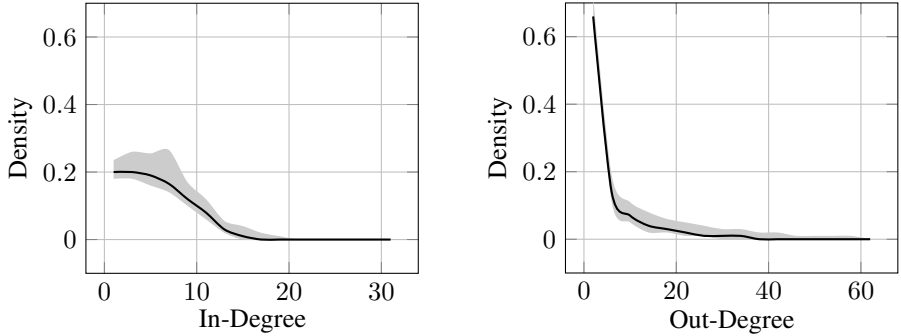
\begin{figure}
    \centering
    \begin{minipage}{0.45\textwidth}
        \begin{tikzpicture}
            \begin{axis}[
                        xlabel=In-Degree,
                        ylabel=Density,
                        ymin=-0.1, ymax=0.7,
                        grid=both,
                        xlabel style={yshift=1mm},
                        ylabel style={yshift=-1mm},
                        width=6cm
                    ] 
            \addplot[smooth, thick, color=black]
                coordinates {(1, 0.2) (3, 0.2) (5, 0.19) (7, 0.165)
                    (9, 0.12) (11, 0.08) (13, 0.03) (15, 0.01)
                    (17, 0.0) (19, 0.0) (21, 0.0) (23, 0.0)
                    (25, 0.0) (27, 0.0) (29, 0.0) (31, 0.0)};
            \addplot[name path=lower, fill=none, draw=none, smooth]
                coordinates {(1, 0.18) (3, 0.18) (5, 0.16) (7, 0.14)
                    (9, 0.1) (11, 0.06) (13, 0.02) (15, 0.0)
                    (17, 0.0) (19, 0.0) (21, 0.0) (23, 0.0)
                    (25, 0.0) (27, 0.0) (29, 0.0) (31, 0.0)};
            \addplot[name path=upper, fill=none, draw=none, smooth]
                coordinates {(1, 0.23525) (3, 0.26) (5, 0.25525) (7, 0.26525)
                    (9, 0.17) (11, 0.12) (13, 0.05525) (15, 0.04)
                    (17, 0.02) (19, 0.01) (21, 0.0) (23, 0.0)
                    (25, 0.0) (27, 0.0) (29, 0.0) (31, 0.0)};
            \addplot[black!20] fill between[of=lower and upper];
            \end{axis}
        \end{tikzpicture}
    \end{minipage}
    \begin{minipage}{0.45\textwidth}
        \begin{tikzpicture}
            \begin{axis}[
                        xlabel=Out-Degree,
                        ylabel=Density,
                        ymin=-0.1, ymax=0.7,
                        grid=both,
                        xlabel style={yshift=1mm},
                        ylabel style={yshift=-1mm},
                        width=6cm
                    ] 
            \addplot[smooth, thick, color=black]
                coordinates {(2, 0.66) (6, 0.14) (10, 0.07) (14, 0.04)
                    (18, 0.03) (22, 0.02) (26, 0.01) (30, 0.01)
                    (34, 0.01) (38, 0.0) (42, 0.0) (46, 0.0)
                    (50, 0.0) (54, 0.0) (58, 0.0) (62, 0.0)};
            \addplot[name path=lower, fill=none, draw=none, smooth]
                coordinates {(2, 0.64) (6, 0.11) (10, 0.05) (14, 0.02)
                    (18, 0.02) (22, 0.01) (26, 0.01) (30, 0.0)
                    (34, 0.0) (38, 0.0) (42, 0.0) (46, 0.0)
                    (50, 0.0) (54, 0.0) (58, 0.0) (62, 0.0)};  
            \addplot[name path=upper, fill=none, draw=none, smooth]
                coordinates {(2, 0.72) (6, 0.19) (10, 0.11) (14, 0.08)
                    (18, 0.06) (22, 0.05) (26, 0.04) (30, 0.03)
                    (34, 0.03) (38, 0.02) (42, 0.02) (46, 0.01)
                    (50, 0.01) (54, 0.01) (58, 0.01) (62, 0.0)};
            \addplot[black!20] fill between[of=lower and upper];
            \end{axis}
        \end{tikzpicture}
    \end{minipage}
\caption{In/out-degree distributions for scale-free simulations with 100 variables and average degree 10 over 100 repetitions.}
\label{fig:in_out_sim}
\end{figure}

\newpage

\section{Complete Results}

Tables \ref{tab:supp_1} and \ref{tab:supp_2} tabulate results for $\mt{BOSS}$, $\mt{GRaSP}$, $\mt{fGES}$, $\mt{PC}$, and $\mt{DAGMA}$ run on linear Gaussian models with Erd\H{o}s-R{\'e}nyi graphs, 100 variables, and sample size 1000, for average degrees 2 through 20. Table \ref{tab:supp_1} shows adjacency precision, adjacency recall, orientation precision, and orientation recall. Table \ref{tab:supp_2} shows the BIC score difference compared to the true graph, number of edges in the estimated graph, and elapsed wall time in seconds. Mean statistics over 20 repetitions are shown with standard deviations in parentheses.

Tables \ref{tab:supp_3} and \ref{tab:supp_4} tabulate results for $\mt{BOSS}$, $\mt{GRaSP}$, $\mt{fGES}$, $\mt{PC}$, and $\mt{DAGMA}$ run on linear Gaussian models with scale-free graphs, 100 variables, and sample size 1000, for average degrees 2 through 20. Table \ref{tab:supp_3} shows adjacency precision, adjacency recall, orientation precision, and orientation recall. Table \ref{tab:supp_4} shows the BIC score difference compared to the true graph, number of edges in the estimated graph, and elapsed wall time in seconds. Mean statistics over 20 repetitions are shown with standard deviations in parentheses.

Tables \ref{tab:supp_5} and \ref{tab:supp_6} tabulate results for $\mt{BOSS}$, $\mt{DAGMA}$, and $\mt{LiNGAM}$ run on linear Gumbel models with scale-free graphs, 100 variables, and sample size 1000, for average degrees 2 through 20. Table \ref{tab:supp_5} shows adjacency precision, adjacency recall, orientation precision, and orientation recall. Table \ref{tab:supp_6} shows the BIC score difference compared to the true graph, number of edges in the estimated graph, and elapsed wall time in seconds. Mean statistics over 20 repetitions are shown with standard deviations in parentheses.

Tables \ref{tab:supp_7} and \ref{tab:supp_8} tabulate results for $\mt{BOSS}$, $\mt{DAGMA}$, and $\mt{LiNGAM}$ run on linear Exponential models with scale-free graphs, 100 variables, and sample size 1000, for average degrees 2 through 20. Table \ref{tab:supp_7} shows adjacency precision, adjacency recall, orientation precision, and orientation recall. Table \ref{tab:supp_8} shows the BIC score difference compared to the true graph, number of edges in the estimated graph, and elapsed wall time in seconds. Mean statistics over 20 repetitions are shown with standard deviations in parentheses.

Tables \ref{tab:supp_9}, \ref{tab:supp_10}, \ref{tab:supp_11} and \ref{tab:supp_12} tabulate results for $\mt{BOSS}$ and $\mt{GRaSP}$ run on linear Gaussian models with scale-free graphs, average degree 20, and sample size 1000, for numbers of variables 100 through 1000. Tables \ref{tab:supp_9} and \ref{tab:supp_10} show adjacency precision, adjacency recall, orientation precision, and orientation recall. Tables \ref{tab:supp_11} and \ref{tab:supp_12} show the BIC score difference compared to the true graph, number of edges in the estimated graph, and elapsed wall time in seconds. Mean statistics over 20 repetitions are shown with standard deviations in parentheses.

Tables \ref{tab:supp_13} and \ref{tab:supp_14} tabulate results for $\mt{BOSS}$, $\mt{DAGMA}$, $\mt{fGES}$, and $\mt{LiNGAM}$ run on simulated fMRI data with linear connection functions and pseudo-empirical noise distributions, 200 variables, average degree 10, and sample size 1000. Table \ref{tab:supp_13} shows adjacency precision, adjacency recall, orientation precision, and orientation recall. Table \ref{tab:supp_14} shows the BIC score difference compared to the true graph, number of edges in the estimated graph, and elapsed wall time in seconds. Mean statistics over 20 repetitions are shown with standard deviations in parentheses.

Tables \ref{tab:supp_15} and \ref{tab:supp_16} tabulate results for $\mt{BOSS}$ and $\mt{fGES}$ run on the empirical fMRI data described in the main text. Table \ref{tab:supp_15} shows BIC scores, number of undirected edges, and total number of edges. Table \ref{tab:supp_16} shows the improvement in BIC score and change in number of edges when comparing $\mt{BOSS}$ to $\mt{fGES}$. Mean statistics over 171 scans are shown with standard deviations in parentheses.

\begin{table}
\caption{Erd\H{o}s-R{\'e}nyi - Mean (Standard Deviation) - 20 Repetitions}
\label{tab:supp_1}
\centering

\end{table}

\bibliographystyle{apalike}
\bibliography{refs.bib}